\theoremstyle{definition}
\newtheorem{theorem}{Theorem}[section]
\newcommand\figref[1]{Fig.~\ref{#1}}
\newcommand\tabref[1]{Table~\ref{#1}}
\newcommand\secref[1]{Sec.~\ref{#1}}
\newcommand\equref[1]{Eq.(\ref{#1})}
\newcounter{daggerfootnote}
\newcommand{\fakeparagraph}[1]{\vspace{1mm}\noindent\textbf{#1.}}
\newcommand{\sysname}{ALQ\xspace}
\begin{document}

\title{Adaptive Loss-aware Quantization for Multi-bit Networks}

\author[1]{Zhongnan~Qu}
\author[2]{Zimu~Zhou}
\author[1]{Yun~Cheng}
\author[1]{Lothar~Thiele}
\affil[1]{Computer Engineering Group, ETH Zurich, Switzerland \protect\\
{\tt\small \{quz, chengyu, thiele\}@ethz.ch}}
\affil[2]{School of Information Systems, Singapore Management University, Singapore \protect\\
{\tt\small zimuzhou@smu.edu.sg}}


\maketitle

\begin{abstract}
We investigate the compression of deep neural networks by quantizing their weights and activations into multiple binary bases, known as multi-bit networks (MBNs), which accelerate the inference and reduce the storage for the deployment on low-resource mobile and embedded platforms.
We propose Adaptive Loss-aware Quantization (\sysname), a new MBN quantization pipeline that is able to achieve an average bitwidth below one-bit without notable loss in inference accuracy.
Unlike previous MBN quantization solutions that train a quantizer by minimizing the error to reconstruct full precision weights, \sysname directly minimizes the quantization-induced error on the loss function involving neither gradient approximation nor full precision maintenance.
\sysname also exploits strategies including adaptive bitwidth, smooth bitwidth reduction, and iterative trained quantization to allow a smaller network size without loss in accuracy.
Experiment results on popular image datasets show that \sysname outperforms state-of-the-art compressed networks in terms of both storage and accuracy.
Code is available at https://github.com/zqu1992/ALQ

\end{abstract}

\section{Introduction}
\label{sec:introduction}
There is a growing interest to deploy deep neural networks on resource-constrained devices to enable new intelligent services such as mobile assistants, augmented reality, and autonomous cars.
However, deep neural networks are notoriously resource-intensive.
Their complexity needs to be trimmed down to fit in mobile and embedded devices.

To take advantage of the various pretrained models for efficient inference on resource-constrained devices, it is common to compress the pretrained models via pruning \cite{bib:ICLR16:Han}, quantization \cite{bib:arXiv14:Gong, bib:CVPR17:Guo, bib:NIPS17:Lin, bib:ICLR18:Xu, bib:ECCV18:Zhang}, distillation \cite{bib:NIPS14:Hinton}, among others.
We focus on quantization, especially quantizing both the full precision weights and activations of a deep neural network into binary encodes and the corresponding scaling factors \cite{bib:arXiv16:Courbariaux, bib:ECCV16:Rastegari}, which are also interpreted as binary basis vectors and floating-point coordinates in a geometry viewpoint \cite{bib:CVPR17:Guo}.
Neural networks quantized with binary encodes replace expensive floating-point operations by bitwise operations, which are supported even by microprocessors and often result in small memory footprints \cite{bib:ICLR18:Mishra}.
Since the space spanned by only one-bit binary basis and one coordinate is too sparse to optimize, many researchers suggest a multi-bit network (MBN) \cite{bib:arXiv14:Gong, bib:CVPR17:Guo, bib:AAAI18:Hu, bib:NIPS17:Lin, bib:ICLR18:Xu, bib:ECCV18:Zhang}, which allows to obtain a small size without notable accuracy loss and still leverages bitwise operations.
An MBN is usually obtained via trained quantization.
Recent studies~\cite{bib:ICLR18:Pedersoli} leverage bit-packing and bitwise computations for efficient deploying binary networks on a wide range of general devices, which also provides more flexibility to design multi-bit/binary networks. 

Most MBN quantization schemes~\cite{bib:arXiv14:Gong, bib:CVPR17:Guo, bib:AAAI18:Hu, bib:NIPS17:Lin, bib:ICLR18:Xu, bib:ECCV18:Zhang} predetermine a global bitwidth, and learn a quantizer to transform the full precision parameters into binary bases and coordinates such that the quantized models do not incur a significant accuracy loss.
However, these approaches have the following drawbacks:
\begin{itemize}
    \item 
    A global bitwidth may be sub-optimal.
    Recent studies on fixed-point quantization \cite{bib:ICLR18:Khoram, bib:ICML16:Lin} show that the optimal bitwidth varies across layers.
    \item 
    Previous efforts \cite{bib:NIPS17:Lin, bib:ICLR18:Xu, bib:ECCV18:Zhang} retain inference accuracy by minimizing the weight reconstruction error rather than the loss function.
    Such an indirect optimization objective may lead to a notable loss in accuracy.
    Furthermore, they rely on approximated gradients, \eg straight-through estimators (STE)
    to propagate gradients through quantization functions during training.
    \item
    Many quantization schemes~\cite{bib:ECCV16:Rastegari,bib:ECCV18:Zhang} keep the first and last layer in full precision, because quantizing these layers to low bitwidth tends to dramatically decrease the inference accuracy \cite{bib:ECCV18:Wan,bib:ICLR18:Mishra2}. 
    However, these two full precision layers can be a significant storage overhead compared to other low-bit layers (see \secref{sec:imagenet}).
    Also, floating-point operations in both layers can take up the majority of computation in quantized networks~\cite{bib:ICLR19:Louizos}.
\end{itemize}

We overcome the above drawbacks via a novel \textit{A}daptive \textit{L}oss-aware \textit{Q}uantization scheme (\sysname).
Instead of using a uniform bitwidth, \sysname assigns a different bitwidth to each group of weights.
More importantly, \sysname directly minimizes the loss function w.r.t. the quantized weights, by iteratively learning a quantizer that \textit{(i)} smoothly reduces the number of binary bases and \textit{(ii)} alternatively optimizes the remaining binary bases and the corresponding coordinates.
Although loss-aware quantization has been proposed for binary and ternary networks \cite{bib:ICLR17:Hou, bib:ICLR18:Hou, bib:CVPR18:Zhou}, they are inapplicable to MBNs due to the extended optimization space.
They also need approximated gradients during training.
\sysname is the first loss-aware quantization scheme for MBNs and eliminates the need for approximating gradients and retaining full precision weights.
\sysname is also able to quantize the first and last layers without incurring a notable accuracy loss.
The main contributions of this work are as follows.
\begin{itemize}
  \item
    We design \sysname, the first loss-aware quantization scheme for multi-bit networks. 
    It is also the first trained quantizer without gradient approximation, and realizes an adaptive bitwidth w.r.t the loss for MBNs (including the first and last layers).
  \item
    \sysname enables extremely low-bit (yet dense tensor form) binary networks with an average bitwidth below 1-bit. 
    Experiments on CIFAR10 show that \sysname can compress VGG to an average bitwidth of $0.4$-bit, while yielding a higher accuracy than other binary networks~\cite{bib:ECCV16:Rastegari,bib:arXiv16:Courbariaux}.
\end{itemize}

\section{Related Work}
\label{sec:related}
\sysname follows the trend to quantize deep neural networks using discrete bases to reduce expensive floating-point operations.
Commonly used bases include fixed-point~\cite{bib:arXiv16:Zhou}, power of two \cite{bib:JMLR17:Hubara, bib:ICLR17:Zhou}, and $\{-1,0,+1\}$ \cite{bib:arXiv16:Courbariaux, bib:ECCV16:Rastegari}.
We focus on quantization with binary bases \ie $\{-1,+1\}$ among others for the following considerations. 
\textit{(i)} 
If both weights and activations are quantized with the same binary basis, it is possible to evaluate 32 multiply-accumulate operations (MACs) with only 3 instructions on a 32-bit microprocessor, \ie bitwise $\texttt{xnor}$, $\texttt{popcount}$, and accumulation.
This will significantly speed up the convolution operations \cite{bib:JMLR17:Hubara}.
\textit{(ii)}
A network quantized to fixed-point requires specialized integer arithmetic units (with various bitwidth) for efficient computing~\cite{bib:MICRO17:Albericio,bib:ICLR18:Khoram}, whereas a network quantized with multiple binary bases adopts the same operations mentioned before as binary networks.
Popular networks quantized with binary bases include \textit{Binary Networks} and \textit{Multi-bit Networks}.

\subsection{Quantization for Binary Networks}
BNN \cite{bib:arXiv16:Courbariaux} is the first network with both binarized weights and activations.
It dramatically reduces the memory and computation but often with notable accuracy loss.
To resume the accuracy degradation from binarization, XNOR-Net \cite{bib:ECCV16:Rastegari} introduces a layer-wise full precision scaling factor into BNN.
However, XNOR-Net leaves the first and last layers unquantized, which consumes more memory.
SYQ \cite{bib:CVPR18:Faraone} studies the efficiency of different structures during binarization/ternarization. 
LAB \cite{bib:ICLR17:Hou} is the first loss-aware quantization scheme which optimizes the weights by directly minimizing the loss function. 

\sysname is inspired by recent loss-aware binary networks such as LAB \cite{bib:ICLR17:Hou}.
Loss-aware quantization has also been extended to fixed-point networks in \cite{bib:ICLR18:Hou}.
However, existing loss-aware quantization schemes \cite{bib:ICLR17:Hou,bib:ICLR18:Hou} are inapplicable for MBNs.
This is because multiple binary bases dramatically extend the optimization space with the same bitwidth (\ie an optimal set of binary bases rather than a single basis), which may be intractable. 
Some proposals \cite{bib:ICLR17:Hou, bib:ICLR18:Hou, bib:CVPR18:Zhou} still require full-precision weights and gradient approximation (backward STE and forward loss-aware projection), introducing undesirable errors when minimizing the loss.
In contrast, \sysname is free from gradient approximation.

\subsection{Quantization for Multi-bit Networks}
MBNs denote networks that use multiple binary bases to trade off storage and accuracy.
Gong \etal propose a residual quantization process, which greedily searches the next binary basis by minimizing the residual reconstruction error~\cite{bib:arXiv14:Gong}.
Guo \etal improve the greedy search with a least square refinement~\cite{bib:CVPR17:Guo}.
Xu \etal~\cite{bib:ICLR18:Xu} separate this search into two alternating steps, fixing coordinates then exhausted searching for optimal bases, and fixing the bases then refining the coordinates using the method in \cite{bib:CVPR17:Guo}.
LQ-Net~\cite{bib:ECCV18:Zhang} extends the scheme of~\cite{bib:ICLR18:Xu} with a moving average updating, which jointly quantizes weights and activations.
However, similar to XNOR-Net \cite{bib:ECCV16:Rastegari}, LQ-Net~\cite{bib:ECCV18:Zhang} does not quantize the first and last layers.
ABC-Net~\cite{bib:NIPS17:Lin} leverages the statistical information of all weights to construct the binary bases as a whole for all layers. 

All the state-of-the-art MBN quantization schemes minimize the weight reconstruction error rather than the loss function of the network. 
They also rely on the gradient approximation such as STE when back propagating the quantization function.
In addition, they all predetermine a uniform bitwidth for all parameters. 
The indirect objective, the approximated gradient, and the global bitwidth lead to a sub-optimal quantization.
\sysname is the first scheme to explicitly optimize the loss function and incrementally train an adaptive bitwidth while without gradient approximation.

\section{Adaptive Loss-Aware Quantization}
\label{sec:weight}
\subsection{Weight Quantization Overview}
\label{subsec:overview}
\fakeparagraph{Notations}
We aim at MBN quantization with an adaptive bitwidth.
To allow adaptive bitwidth, we structure the weights in \textit{disjoint groups}.
Specifically, for the vectorized weights $\bm{w}$ of a given layer $l$, where $\bm{w}\in\mathbb{R}^{N\times1}$, we divide $\bm{w}$ into $G$ disjoint groups.
For simplicity, we omit the subscript $l$. 
Each group of weights is denoted by $\bm{w}_{g}$, where $\bm{w}_{g}\in\mathbb{R}^{n\times1}$ and $N = n \times G$.
Then the quantized weights of each group, $\bm{\hat{w}}_g = \sum_{i=1}^{I_g}\alpha_i\bm{\beta}_i=\bm{B}_g\bm{\alpha}_g$. 
$\bm{\beta}_i\in\{-1,+1\}^{n\times 1}$ and $\alpha_i\in\mathbb{R}_+$ are the $i^{\text{th}}$ binary basis and the corresponding coordinate; $I_g$ represents the bitwidth, \ie the number of binary bases, of group $g$.
$\bm{B}_g\in\{-1,+1\}^{n\times I_g}$ and $\bm{\alpha}_g\in\mathbb{R}_+^{I_g\times1}$ are the matrix forms of the binary bases and the coordinates.
We further denote $\bm{\alpha}$ as vectorized coordinates $\{\bm{\alpha}_g\}_{g=1}^G$, and $\bm{B}$ as concatenated binary bases $\{\bm{B}_g\}_{g=1}^G$ of all weight groups in layer $l$.
A layer $l$ quantized as above yields an average bitwidth $I = \frac{1}{G}\sum_{g = 1}^G I_g$.
We discuss the choice of group size $n$, and the initial $\bm{B}_g$, $\bm{\alpha}_g$, $I_g$ in \secref{sec:structure}.

\fakeparagraph{Problem Formulation}
\sysname quantizes weights by directly minimizing the loss function rather than the reconstruction error.
For layer $l$, the process can be formulated as the following optimization problem.
\begin{eqnarray}
  \min_{\bm{\hat{w}}_g} & & \ell\left(\bm{\hat{w}}_g\right) \label{eq:objective} \\
  \text{s.t.} & & \bm{\hat{w}}_g = \sum_{i=1}^{I_g}\alpha_i\bm{\beta}_i = \bm{B}_g\bm{\alpha}_g
  \label{eq:weights}
  \\& & \mathrm{card}(\bm{\alpha}) = I\times G \leq \mathrm{I_{min}}\times G
  \label{eq:sumIg}
\end{eqnarray}
where $\ell$ is the loss; $\mathrm{card}(.)$ denotes the cardinality of the set, \ie the total number of elements in $\bm{\alpha}$; $\mathrm{I_{min}}$ is the desirable average bitwidth.
Since the group size $n$ is the same in one layer, $\mathrm{card}(\bm{\alpha})$ is proportional to the storage consumption.

\sysname tries to solve the optimization problem in \equref{eq:objective}-\equref{eq:sumIg} by \textit{iteratively} solving two sub-problems as below.
The overall pseudocode is illustrated in Alg.~\ref{alg:pipeline} in Appendix~\ref{sec:pipeline}.

\begin{itemize}
  \item 
  \textbf{Step 1: Pruning in $\bm{\alpha}$ Domain} (\secref{sec:pruning}).
  In this step, we progressively reduce the average bitwidth $I$ for a layer $l$ by pruning the least important (w.r.t. the loss) coordinates in $\bm{\alpha}$ domain.
  Note that removing an element $\alpha_i$ will also lead to the removal of the binary basis $\bm{\beta}_i$, which in effect results in a smaller bitwidth $I_g$ for group $g$.
  This way, no sparse tensor is introduced. 
  Sparse tensors could lead to a detrimental irregular computation.
  Since the importance of each weight group differs, the resulting $I_g$ varies across groups, and thus contributes to an adaptive bitwidth $I_g$ for each group.
  In this step, we only set some elements of $\bm{\alpha}$ to zero (also remove them from $\bm{\alpha}$ leading to a reduced $I_g$) without changing the others.
  The optimization problem for Step 1 is:
  \begin{eqnarray}
  \min_{\bm{\alpha}} & & \ell\left(\bm{\alpha}\right) \label{eq:pruning:obj} \\
  \text{s.t.} & & \mathrm{card}(\bm{\alpha}) \leq \mathrm{I_{min}}\times G \label{eq:pruning:constraint}
  \end{eqnarray}
  \item

  \textbf{Step 2: Optimizing Binary Bases $\bm{B}_g$ and Coordinates $\bm{\alpha}_g$} (\secref{sec:updating}).
  In this step, we retrain the remaining binary bases and coordinates to recover the accuracy degradation induced by the bitwidth reduction.
  Similar to~\cite{bib:ICLR18:Xu}, we take an alternative approach for better accuracy recovery.
  Specifically, we first search for a new set of binary bases w.r.t. the loss given fixed coordinates.
  Then we optimize the coordinates by fixing the binary bases.
  The optimization problem for Step 2 is:
  \begin{eqnarray}
  \min_{\bm{\hat{w}}_g} & & \ell\left(\bm{\hat{w}}_g\right) \label{eq:updating:obj}\\
  \text{s.t.} & & \bm{\hat{w}}_g = \sum_{i=1}^{I_g}\alpha_i\bm{\beta}_i=\bm{B}_g\bm{\alpha}_g \label{eq:updating:constraint}
  \end{eqnarray}
\end{itemize}

\fakeparagraph{Optimizer Framework}
We consider both sub-problems above as an optimization problem with \textit{domain constraints}, and solve them using the same optimization framework: subgradient methods with projection update \cite{bib:JMLR11:Duchi}.

The optimization problem in \equref{eq:updating:obj}-\equref{eq:updating:constraint} imposes domain constraints on $\bm{B}_g$ because they can only be discrete binary bases.
The optimization problem in \equref{eq:pruning:obj}-\equref{eq:pruning:constraint} can be considered as with a trivial domain constraint: the output $\bm{\alpha}$ should be a subset (subvector) of the input $\bm{\alpha}$.
Furthermore, the feasible sets for both $\bm{B}_g$ and $\bm{\alpha}$ are bounded.

Subgradient methods with projection update are effective to solve problems in the form of $\min_{\bm{x}}(\ell(\bm{x}))$ s.t. $\bm{x}\in\mathbb{X}$ \cite{bib:JMLR11:Duchi}.
We apply AMSGrad~\cite{bib:ICLR18:Reddi}, an adaptive stochastic subgradient method with projection update, as the common optimizer framework in the two steps.
At iteration $s$, AMSGrad generates the next update as,
\begin{equation}
\begin{split}
\bm{x}^{s+1} & = \Pi_{\mathbb{X},\sqrt{\bm{\hat{V}}^s}}(\bm{x}^s-a^s\bm{m}^s/\sqrt{\bm{\hat{v}}^s}) \\
             & = \underset{{\bm{x}\in\mathbb{X}}}{\mathrm{argmin}}~\|(\sqrt{\bm{\hat{V}}^s})^{1/2}(\bm{x}-(\bm{x}^s-\frac{a^s\bm{m}^s}{\sqrt{\bm{\hat{v}}^s}}))\|
\end{split}
\label{eq:amsgradx}
\end{equation}
where $\Pi$ is a projection operator; $\mathbb{X}$ is the feasible domain of $\bm{x}$; $a^s$ is the learning rate; $\bm{m}^s$ is the (unbiased) first momentum; $\bm{\hat{v}}^s$ is the (unbiased) maximum second momentum; and $\bm{\hat{V}}^s$ is the diagonal matrix of $\bm{\hat{v}}^s$.

In our context, \equref{eq:amsgradx} can be written as,
\begin{equation}
\bm{\hat{w}}_g^{s+1} = \underset{\bm{\hat{w}}_g\in\mathbb{F}}{\mathrm{argmin}} f^s(\bm{\hat{w}}_g)
\label{eq:amsgradw2}
\end{equation}
\begin{equation}
f^s=(a^s\bm{m}^s)^{\mathrm{T}}(\bm{\hat{w}}_g-\bm{\hat{w}}_g^s)+\frac{1}{2}(\bm{\hat{w}}_g-\bm{\hat{w}}_g^s)^{\mathrm{T}}\sqrt{\bm{\hat{V}}^s}(\bm{\hat{w}}_g-\bm{\hat{w}}_g^s)
\label{eq:amsgradw3}
\end{equation}
where $\mathbb{F}$ is the feasible domain of $\bm{\hat{w}}_g$.

Step 1 and Step 2 have different feasible domains of $\mathbb{F}$ according to their objective (details in~\secref{sec:pruning} and~\secref{sec:updating}).
\equref{eq:amsgradw3} approximates the loss increment incurred by $\bm{\hat{w}}_g$ around the current point $\bm{\hat{w}}_g^s$ as a quadratic model function under domain constraints \cite{bib:JMLR11:Duchi,bib:ICLR18:Reddi}.
For simplicity, we replace $a^s\bm{m}^s$ with $\bm{g}^s$ and replace $\sqrt{\bm{\hat{V}}^s}$ with $\bm{H}^s$.
$\bm{g}^s$ and $\bm{H}^s$ are iteratively updated by the loss gradient of $\bm{\hat{w}}_g^s$. 
Thus, the required input of each AMSGrad step is $\frac{\partial\ell^s}{\partial {\bm{\hat{w}}_g}^s}$.
Since $\bm{\hat{w}}_g^s$ is used as an intermediate value during the forward, it can be directly obtained during the backward.

\subsection{Pruning in $\bm{\alpha}$ Domain}
\label{sec:pruning}

As introduced in \secref{subsec:overview}, we reduce the bitwidth $I$ by pruning the elements in $\bm{\alpha}$ w.r.t. the resulting loss.
If one element $\alpha_i$ in $\bm{\alpha}$ is pruned, the corresponding dimension $\bm{\beta}_i$ is also removed from $\bm{B}$.
Now we explain how to instantiate the optimizer in \equref{eq:amsgradw2} to solve \equref{eq:pruning:obj}-\equref{eq:pruning:constraint} of Step 1.

The cardinality of the chosen subset (\ie the average bitwidth) is uniformly reduced over iterations. 
For example, assume there are $T$ iterations in total, the initial average bitwidth is $I^0$ and the desired average bitwidth after $T$ iterations $I^{T}$ is $\mathrm{I_{min}}$.
Then at each iteration $t$, ($M_p = \mathrm{round}((I^{0}-\mathrm{I_{min}})\times G/T)$) of $\alpha_i^t$'s are pruned in this layer. 
This way, the cardinality after $T$ iterations will be smaller than $\mathrm{I_{min}}\times G$.
See Alg.~\ref{alg:pruning} in Appendix~\ref{sec:pseudocodepruning} for the pseudocode.

When pruning in the $\bm{\alpha}$ domain, $\bm{B}$ is considered as invariant. 
Hence \equref{eq:amsgradw2} and \equref{eq:amsgradw3} become,
\begin{equation}
\bm{\alpha}^{t+1} = \underset{\bm{\alpha}\in\mathbb{P}}{\mathrm{argmin}}~f_{\bm{\alpha}}^t(\bm{\alpha})
\label{eq:amsgradalpha1}
\end{equation}
\begin{equation}
f_{\bm{\alpha}}^t=(\bm{g}_{\bm{\alpha}}^t)^{\mathrm{T}}
(\bm{\alpha}-\bm{\alpha}^t)
+\frac{1}{2}
(\bm{\alpha}-\bm{\alpha}^t)^{\mathrm{T}}
\bm{H_\alpha}^t
(\bm{\alpha}-\bm{\alpha}^t)
\label{eq:amsgradalpha2}
\end{equation}
where $\bm{g}_{\bm{\alpha}}^t$ and $\bm{H_\alpha}^t$ are similar as in \equref{eq:amsgradw3} but are in the $\bm{\alpha}$ domain.
If $\alpha_i^t$ is pruned, the $i^{\text{th}}$ element in $\bm{\alpha}$ is set to $0$ in the above~\equref{eq:amsgradalpha1} and~\equref{eq:amsgradalpha2}. 
Thus, the constrained domain $\mathbb{P}$ is taken as all possible vectors with $M_p$ zero elements in $\bm{\alpha}^t$. 

AMSGrad uses a diagonal matrix of $\bm{H_\alpha}^t$ in the quadratic model function, which decouples each element in $\bm{\alpha}^t$.
This means the loss increment caused by several $\alpha_i^t$ equals the sum of the increments caused by them individually, which are calculated as,
\begin{equation}
f_{\bm{\alpha},i}^t = -g_{\bm{\alpha},i}^t~\alpha_i^t+\frac{1}{2}~H_{\bm{\alpha},{ii}}^t~({\alpha_i^t})^2
\label{eq:taylorpruning}
\end{equation}
All items of $f_{\bm{\alpha},i}^t$ are sorted in ascending.
Then the first $M_p$ items ($\alpha_i^t$) in the sorted list are removed from $\bm{\alpha}^t$, and results in a smaller cardinality $I^{t}\times G$. 
The input of the AMSGrad step in $\bm{\alpha}$ domain is the loss gradient of $\bm{\alpha}_g^t$, which can be computed with the chain rule,
\begin{equation}
\frac{\partial\ell^t}{\partial\bm{\alpha}_g^t}={\bm{B}_g^t}^{\mathrm{T}} \frac{\partial\ell^t}{\partial {\bm{\hat{w}}_g}^t}
\label{eq:gradientspruning}
\end{equation}
\begin{equation}
\bm{\hat{w}}_g^t=\bm{B}_g^t \bm{\alpha}_g^t
\end{equation}

Our pipeline allows to reduce the bitwidth smoothly, since the average bitwidth can be floating-point.
In \sysname, since different layers have a similar group size (see \secref{sec:structure}), the loss increment caused by pruning is sorted among all layers, such that only a global pruning number needs to be determined.
The global pruning number is defined by the total number of pruned $\alpha_i$'s, \ie the difference of $\sum_l{\mathrm{card}(\bm{\alpha}_l)}$ before and after pruning.
More details are explained in Appendix~\ref{sec:pseudocodepruning} and \ref{sec:pipeline}.
This pruning step not only provides a loss-aware adaptive bitwidth, but also seeks a better initialization for training the following lower bitwidth quantization, since quantized weights may be relatively far from their original full precision values.

\subsection{Optimizing Binary Bases and Coordinates}
\label{sec:updating}

After pruning, the loss degradation needs to be recovered. 
Following~\equref{eq:amsgradw2}, the objective in Step 2 is
\begin{equation}
\bm{\hat{w}}_g^{s+1} = \underset{\bm{\hat{w}}_g\in\mathbb{F}}{\mathrm{argmin}}~f^s(\bm{\hat{w}}_g)
\end{equation}
The constrained domain $\mathbb{F}$ is decided by both binary bases and full precision coordinates.
Hence directly searching optimal $\bm{\hat{w}}_g$ is NP-hard.
Instead, we optimize $\bm{B}_g$ and $\bm{\alpha}_g$ in an alternative manner, as with prior MBN quantization w.r.t. the reconstruction error \cite{bib:ICLR18:Xu,bib:ECCV18:Zhang}.

\fakeparagraph{Optimizing $\bm{B}_g$}
We directly search the optimal bases with AMSGrad.
In each optimizing iteration $q$, we fix $\bm{\alpha}_g^q$, and update $\bm{B}_g^q$.
We find the optimal increment for each group of weights, such that it converts to a new set of binary bases, $\bm{B}_g^{q+1}$.
This optimization step searches a new space spanned by $\bm{B}_g^{q+1}$ based on the loss reduction, which prevents the pruned space to be always a subspace of the previous one.
See Alg.~\ref{alg:optimizingbases} in Appendix~\ref{sec:pseudocodeB} for the detailed pseudocode.

According to~\equref{eq:amsgradw2} and~\equref{eq:amsgradw3}, the optimal $\bm{B}_g$ w.r.t. the loss is updated by,
\begin{equation}
\bm{B}_g^{q+1} = \underset{\bm{B}_g\in\{-1,+1\}^{n\times I_g}}{\mathrm{argmin}}~f^q(\bm{B}_g)
\label{eq:amsgradB}
\end{equation}
\begin{equation}
\begin{split}
f^q=&~(\bm{g}^q)^{\mathrm{T}}(\bm{B}_g\bm{\alpha}_g^{q}-\bm{\hat{w}}_g^q)+ \\
    &~\frac{1}{2} (\bm{B}_g\bm{\alpha}_g^{q}-\bm{\hat{w}}_g^q)^{\mathrm{T}} \bm{H}^q (\bm{B}_g\bm{\alpha}_g^{q}-\bm{\hat{w}}_g^q)
\label{eq:amsgradB2}
\end{split}
\end{equation}
where $\bm{\hat{w}}_g^q = \bm{B}_g^{q}\bm{\alpha}_g^{q}$.

Since $\bm{H}^q$ is diagonal in AMSGrad, each row vector in $\bm{B}_g^{q+1}$ can be independently determined.
For example, the $j^{\text{th}}$ row is computed as,
\begin{equation}
\bm{B}_{g,j}^{q+1} = \underset{\bm{B}_{g,j}}{\mathrm{argmin}}~\|\bm{B}_{g,j}\bm{\alpha}_{g}^q-(\hat{w}_{g,j}^q-g^q_j/H_{jj}^q)\|
\label{eq:rowupdating}
\end{equation}
In general, $n>>I_g$.
For each group, we firstly compute all $2^{I_g}$ possible values of
\begin{equation}
\bm{b}^{\mathrm{T}}\bm{\alpha}_{g}^q~,~~~ \bm{b}^{\mathrm{T}}\in\{-1,+1\}^{1\times I_g}
\label{eq:comb}
\end{equation}
Then each row vector $\bm{B}_{g,j}^{q+1}$ can be directly assigned by the optimal $\bm{b}^{\mathrm{T}}$ through exhaustive search.

\fakeparagraph{Optimizing $\bm{\alpha}_g$}
The above obtained set of binary bases $\bm{B}_g$ spans a new linear space.
The current $\bm{\alpha}_g$ is unlikely to be a (local) optimal w.r.t. the loss in this space, so now we optimize $\bm{\alpha}_g$.
Since $\bm{\alpha}_g$ is full precision, \ie $\bm{\alpha}_g\in\mathbb{R}^{I_g\times1}$, there is no domain constraint and thus no need for projection updating.
Optimizing full precision $\bm{w}_g$ takes incremental steps in original $n$-dim full space (spanned by orthonormal bases). 
Similarly, optimizing $\bm{\alpha}_g$ searches steps in a $I_g$-dim subspace (spanned by $\bm{B}_g$). 
Hence conventional training strategies can be directly used to optimize $\bm{\alpha}_g$.
See Alg.~\ref{alg:optimizingcoordinates} in Appendix~\ref{sec:pseudocodealpha} for the detailed pseudocode.

Similar as~\equref{eq:amsgradalpha1} and~\equref{eq:amsgradalpha2}, we construct an AMSGrad optimizer in $\bm{\alpha}$ domain but without projection updating, for each group in the $p^{\text{th}}$ iteration as,
\begin{equation}
\bm{\alpha}_g^{p+1} = \bm{\alpha}_g^p-a_{\bm{\alpha}}^p\bm{m}_{\bm{\alpha}}^p/\sqrt{\bm{\hat{v}_\alpha}^p}
\label{eq:optimizingalpha}
\end{equation}

We also add an L2-norm regularization on $\bm{\alpha}_g$ to enforce unimportant coordinates to zero. 
If there is a negative value in $\bm{\alpha}_{g}$, the corresponding basis is set to its negative complement, to keep $\bm{\alpha}_{g}$ semi-positive definite. Optimizing $\bm{B}_g$  and $\bm{\alpha}_g$ does not influence the number of binary bases $I_g$.

\fakeparagraph{Optimization Speedup}
Since $\bm{\alpha}_g$ is full precision, updating $\bm{\alpha}_g^q$ is much cheaper than exhaustively search $\bm{B}_g^{q+1}$. 
Even if the main purpose of the first step in~\secref{sec:updating} is optimizing bases, we also add an updating process for $\bm{\alpha}_g^q$ in each optimizing iteration $q$.

We fix $\bm{B}_{g}^{q+1}$, and update $\bm{\alpha}_{g}^{q}$.
The overall increment of quantized weights from both updating processes is,
\begin{equation}
\bm{\hat{w}}^{q+1}_g - \bm{\hat{w}}^q_{g} = \bm{B}_{g}^{q+1}\bm{\alpha}_{g}^{q+1}-\bm{B}_{g}^{q}\bm{\alpha}_{g}^{q}
\label{eq:incrw}
\end{equation}
Substituting~\equref{eq:incrw} into~\equref{eq:amsgradw2} and~\equref{eq:amsgradw3}, we have,
\begin{equation}
\begin{split}
\bm{\alpha}_{g}^{q+1} = -&((\bm{B}_{g}^{q+1})^{\mathrm{T}} \bm{H}^q \bm{B}_{g}^{q+1})^{-1}\times \\
                        &~((\bm{B}_{g}^{q+1})^{\mathrm{T}}(\bm{g}^q-\bm{H}^q\bm{B}^q_{g}\bm{\alpha}_{g}^{q}))
\label{eq:alphaIncr}
\end{split}
\end{equation}
To ensure the inverse in~\equref{eq:alphaIncr} exists, we add $\lambda \mathbf{I}$ onto $(\bm{B}_{g}^{q+1})^{\mathrm{T}} \bm{H}^q \bm{B}_{g}^{q+1}$, where $\lambda=10^{-6}$.

\section{Activation Quantization}
\label{sec:activationQuantization}

To leverage bitwise operations for speedup, the inputs of each layer (\ie the activation output of the last layer) also need to be quantized into the multi-bit form.
Unlike previous works \cite{bib:ECCV18:Zhang} that quantize activations with a different binary basis ($\{0,+1\}$) as weights, we also quantize activations with $\{-1,+1\}$.
This way, we only need 3 instructions rather than 5 instructions to replace the original 32 MACs (see \secref{sec:related}).

Our activation quantization follows the idea proposed in \cite{bib:arXiv18:Choi}, \ie a parameterized clipping for fixed-point activation quantization, but it is adapted to the multi-bit form.
Specially, we replace ReLu with a step activation function.
The vectorized activation $\bm{x}$ of the $l^{\text{th}}$ layer is quantized as,
\begin{equation}
\bm{x}\doteq\bm{\hat{x}}=x_{ref}+\bm{D}\bm{\gamma}=\bm{D}'\bm{\gamma}'
\label{eq:act}
\end{equation}
where $\bm{D}\in\{-1,+1\}^{N_x\times I_x}$, and $\bm{\gamma}\in\mathbb{R}_+^{I_x\times1}$.
$\bm{\gamma}'$ is a column vector formed by $[x_{ref},\bm{\gamma}^\mathrm{T}]^{\mathrm{T}}$; $\bm{D}'$ is a matrix formed by $[\bm{1}^{{N_x\times 1}}, \bm{D}]$.
$N_x$ is the dimension of $\bm{x}$, and $I_x$ is the quantization bitwidth for activations.
$x_{ref}$ is the introduced layerwise (positive floating-point) reference to fit in the output range of ReLu.
During inference, $x_{ref}$ is convoluted with the weights of the next layer and added to the bias.
Hence the introduction of $x_{ref}$ does not lead to extra computations. 
The output of the last layer is not quantized, as it does not involve computations anymore.
For other settings, we directly adopt them used in \cite{bib:ECCV18:Zhang}. 
$\bm{\gamma}$ and $x_{ref}$ are updated during the forward propagation with a running average to minimize the squared reconstruction error as,
\begin{equation}
\bm{\gamma}'_{new} = (\bm{D'}^{\mathrm{T}}\bm{D}')^{-1}\bm{D'}^{\mathrm{T}}\bm{x}
\end{equation}
\begin{equation}
\bm{\gamma}' = 0.9\bm{\gamma}'+(1-0.9)\bm{\gamma}'_{new}
\end{equation}

The (quantized) weights are also further fine-tuned with our optimizer to resume the accuracy drop.
Here, we only set a global bitwidth for all layers in activation quantization.

\section{Experiments}
\label{sec:experiment}
We implement \sysname with Pytorch~\cite{bib:NIPSWorkshop17:Paszke}, and evaluate its performance on MNIST~\cite{bib:MNIST}, CIFAR10~\cite{cifar10}, and ILSVRC12 (ImageNet)~\cite{ILSVRC15} using LeNet5~\cite{bib:PIEEE98:LeCun}, VGG~\cite{bib:ICLR17:Hou,bib:ECCV16:Rastegari}, and ResNet18/34~\cite{bib:CVPR16:He}, respectively.
More implementation details are provided in Appendix~\ref{sec:implmentationDetails}.

\subsection{\sysname Initialization}
\label{sec:structure}
We adapt the network sketching proposed in~\cite{bib:CVPR17:Guo} for $\bm{\hat{w}}_g$ initialization, and realize a structured sketching (see Alg.~\ref{alg:SS} in Appendix~\ref{sec:initalgorithm}).
Some important parameters in Alg.~\ref{alg:SS} are chosen as below. 

\fakeparagraph{Group Size $n$} 
We empirically decide a range for the group size $n$ by trading off between the weight reconstruction error and the storage compression rate.
A group size from $32$ to $512$ achieves a good balance.
Accordingly, for a convolution layer, grouping in channel-wise ($\bm{w}_{c,:,:,:}$), kernel-wise ($\bm{w}_{c,d,:,:}$), and pixel-wise ($\bm{w}_{c,:,h,w}$) appears to be appropriate.
Channel-wise $\bm{w}_{c,:}$ and subchannel-wise $\bm{w}_{c,d:d+n}$ grouping are suited for a fully connected layer.
In addition, the most frequently used structures for current popular networks are pixel-wise (convolution layers) and (sub)channel-wise (fully connected layers), which align with the bit-packing approach in~\cite{bib:ICLR18:Pedersoli}.
See Appendix~\ref{sec:group} for more details on grouping.

\fakeparagraph{Maximum Bitwidth $\mathrm{I_{max}}$ for Group $g$}
The initial $I_g$ is set by a predefined initial reconstruction precision or a maximum bitwidth.
We notice that the accuracy degradation caused by the initialization can be fully recovered after several optimization epochs proposed in \secref{sec:updating}, if the maximum bitwidth is $8$.
For example, ResNet18 on ILSVRC12 after such an initialization can be retrained to a Top-1/5 accuracy of $70.3\%$/$89.4\%$, even higher than its full precision counterpart ($69.8\%$/$89.1\%$). 
For smaller networks, \eg VGG on CIFAR10, a maximum bitwidth of $6$ is sufficient.

\subsection{Convergence Analysis}
\label{sec:convergence}
\fakeparagraph{Settings}
This experiment conducts the ablation study of our optimization step in \secref{sec:updating}.
We show the advantages of our optimizer in terms of convergence, on networks quantized with a uniform bitwidth.
Optimizing $\bm{B}_g$ with speedup (also Alg.~\ref{alg:optimizingbases}) is compared, since it takes a similar alternating step as previous works~\cite{bib:ICLR18:Xu,bib:ECCV18:Zhang}. 
Recall that our optimizer \textit{(i)} has no gradient approximation and \textit{(ii)} directly minimizes the loss.
We use AMSGrad\footnote{AMSGrad can also optimize full precision parameters.} with a learning rate of 0.001, and compare with following baselines.

\begin{itemize}
    \item 
    \textit{STE with rec. error:}
    This baseline quantizes the maintained full precision weights by minimizing the reconstruction error (rather than the loss) during forward and approximates gradients via STE during backward.
    This approach is adopted in some of the best-performing quantization schemes such as LQ-Net \cite{bib:ECCV18:Zhang}.
    \item 
    \textit{STE with loss-aware:}
    This baseline approximates gradients via STE but performs a loss-aware projection updating (adapted from our \sysname).
    It can be considered as a multi-bit extension of prior loss-aware quantizers for binary and ternary networks \cite{bib:ICLR17:Hou,bib:ICLR18:Hou}.
    See Alg.~\ref{alg:ste} in Appendix~\ref{sec:ste} for the detailed pseudocode.
\end{itemize}

\begin{figure}[htbp!]
\centering
\includegraphics[width=0.45\textwidth]{./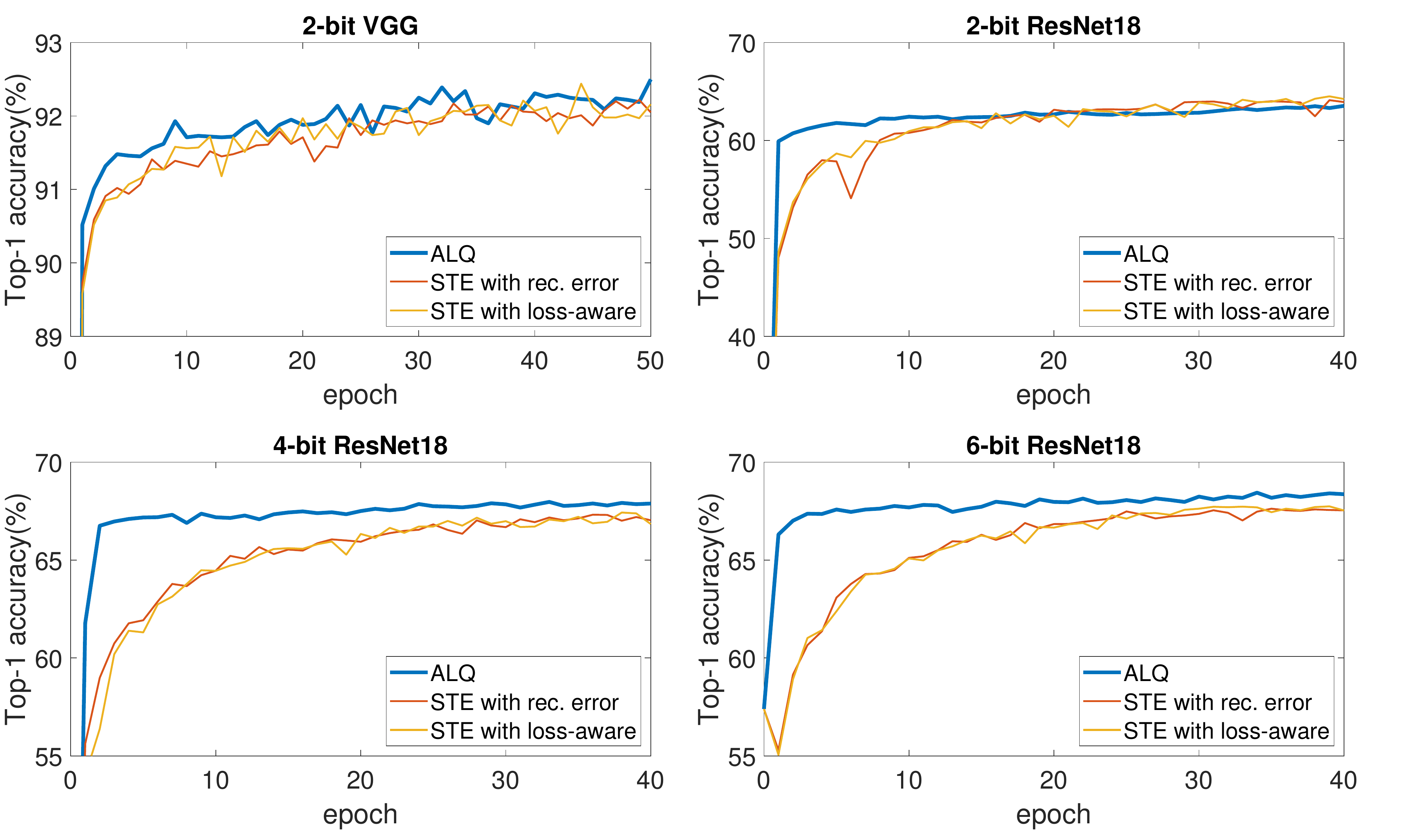}
\caption{Validation accuracy trained with \sysname/baselines.}
\label{fig:convergence}
\end{figure}

\fakeparagraph{Results}
\figref{fig:convergence} shows the Top-1 validation accuracy of different optimizers, with increasing epochs on uniform bitwidth MBNs.
\sysname exhibits not only a more stable and faster convergence, but also a higher accuracy.
The exception is 2-bit ResNet18. 
\sysname converges faster, but the validation accuracy trained with STE gradually exceeds \sysname after about 20 epochs.
For training a large network with $\leq2$ bitwidth, the positive effect brought from the high precision trace may compensate certain negative effects caused by gradient approximation.
In this case, keeping full precision parameters will help calibrate some aggressive steps of quantization, resulting in a slow oscillating convergence to a better local optimum.
This also encourages us to add several epochs of STE based optimization (\eg \textit{STE with loss-aware}) after low bitwidth quantization to further regain the accuracy.

\subsection{Effectiveness of Adaptive Bitwidth}
\fakeparagraph{Settings}
This experiment demonstrates the performance of incrementally trained adaptive bitwidth in \sysname, \ie our pruning step in \secref{sec:pruning}.
Uniform bitwidth quantization (an equal bitwidth allocation across all groups in all layers) is taken as the baseline.
The baseline is trained with the same number of epochs as the sum of all epochs during the bitwidth reduction.
Both \sysname and the baseline are trained with the same learning rate decay schedule.

\fakeparagraph{Results}
\tabref{tab:adapt} shows that there is a large Top-1 accuracy gap between an adaptive bitwidth trained with \sysname and a uniform bitwidth (baseline).
In addition to the overall average bitwidth ($I_W$), we also plot the distribution of the average bitwidth and the number of weights across layers (both models in \tabref{tab:adapt}) in \figref{fig:adapt}.
Generally, the first several layers and the last layer are more sensitive to the loss, thus require a higher bitwidth. 
The shortcut layers in ResNet architecture (\eg the $8^{\text{th}}$, $13^{\text{rd}}$, $18^{\text{th}}$ layers in ResNet18) also need a higher bitwidth.
We think this is due to the fact that the shortcut pass helps the information forward/backward propagate through the blocks. 
Since the average of adaptive bitwidth can have a decimal part, \sysname can achieve a compression rate with a much higher resolution than a uniform bitwidth, which not only controls a more precise trade-off between storage and accuracy, but also benefits our incremental bitwidth reduction (pruning) scheme.

\begin{table}[htbp!]
\centering
\caption{Comparison between Baseline (Uniform Bitwidth) and \sysname (Adaptive Bitwidth)}
\label{tab:adapt}
\small
\begin{tabular}{ccc}
\toprule
Method                                      & $I_W$                    & Top-1              \\ \hline
Baseline VGG (uniform)                      & 1                        & 91.8\%             \\
\textbf{\sysname VGG}                       & \textbf{0.66}            & \textbf{92.0}\%    \\
Baseline ResNet18 (uniform)                 & 2                        & 66.2\%             \\
\textbf{\sysname ResNet18}                  & \textbf{2.00}             & \textbf{68.9}\%    \\
\bottomrule
\end{tabular}
\end{table}

\begin{figure}[htbp!]
\centering
\includegraphics[width=0.45\textwidth]{./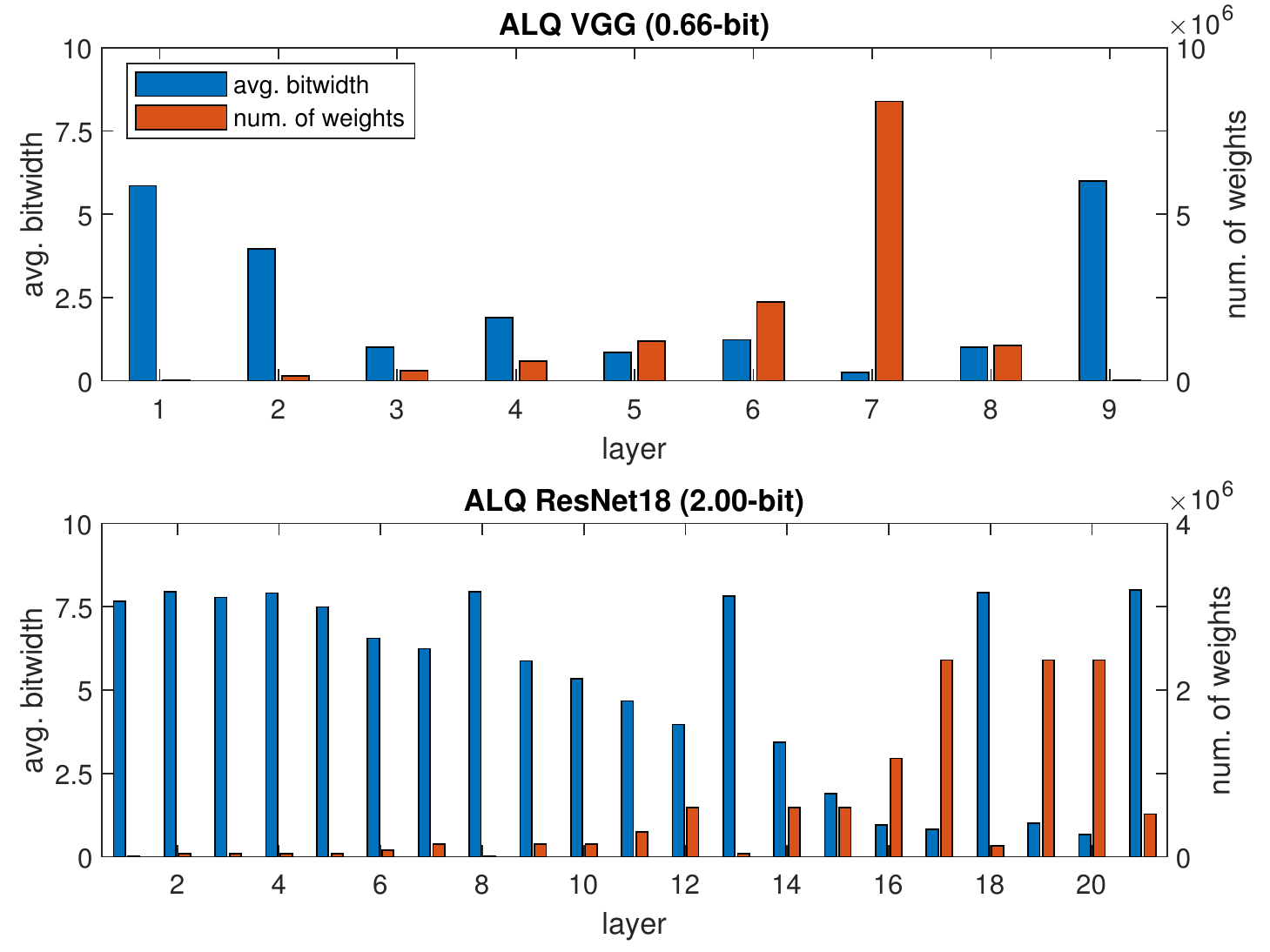}
\caption{Distribution of the average bitwidth and the number of weights across layers.}
\label{fig:adapt}
\end{figure}

It is worth noting that both the optimization step and the pruning step in \sysname follow the same metric, \ie the loss increment modeled by a quadratic function, allowing them to work in synergy. 
We replace the step of optimizing $\bm{B}_g$ in \sysname with an STE step (with the reconstruction forward, see in \secref{sec:convergence}), and keep other steps unchanged in the pipeline. 
When the VGG model is reduced to an average bitwidth of $0.66$-bit, the simple combination of an STE step with our pruning step can only reach $90.7\%$ Top-1 accuracy, which is significantly worse than \sysname's $92.0\%$.

\subsection{Comparison with States-of-the-Arts}
\label{sec:comparison}

\subsubsection{Non-structured Pruning on MNIST}
\fakeparagraph{Settings}
Since \sysname can be considered as a (structured) pruning scheme in $\bm{\alpha}$ domain, we first compare \sysname with two widely used non-structured pruning schemes: Deep Compression (DC) \cite{bib:ICLR16:Han} and ADMM-Pruning (ADMM) \cite{bib:ECCV18:Zhang2}, \ie pruning in the original $\bm{w}$ domain.
For a fair comparison, we implement a modified LeNet5 model as in \cite{bib:ICLR16:Han,bib:ECCV18:Zhang2} on MNIST dataset~\cite{bib:MNIST} and compare the Top-1 prediction accuracy and the compression rate.
Note that the storage consumption only counts the weights, since the weights take the most majority of the storage (even after quantization) in comparison to others, \eg bias, activation quantizer, batch normalization, \etc
The storage consumption of weights in \sysname includes the look-up-table for the resulting $I_g$ in each group (the same goes for the following experiments).

\begin{table}[htbp!]
\centering
\caption{Comparison with State-of-the-Art Non-structured Pruning Methods (LeNet5 on MNIST).}
\label{tab:lenet5}
\small
\begin{tabular}{ccc}
\toprule
Method                                      & Weights~(CR)                                      & Top-1                 \\ \hline
FP                                          & 1720KB~(1$\times$ )                               & 99.19\%               \\
DC~\cite{bib:ICLR16:Han}                    & 44.0KB~(39$\times$)                               & \textbf{99.26\%}      \\
ADMM~\cite{bib:ECCV18:Zhang2}               & 24.2KB~(71$\times$)                               & 99.20\%               \\
\textbf{\sysname}                           & \textbf{22.7KB}~(\textbf{76}$\bm{\times}$)        & \textbf{99.12\%}      \\ 
\bottomrule
\end{tabular}
\end{table}

\fakeparagraph{Results}
\sysname shows the highest compression rate (\textbf{76}$\bm{\times}$) while keeping acceptable Top-1 accuracy compared to the two other pruning methods (see \tabref{tab:lenet5}). 
FP stands for full precision, and the weights in the original full precision LeNet5 consume $1720$KB \cite{bib:ICLR16:Han}.
CR denotes the compression rate of storing the weights.

It is worth mentioning that both DC \cite{bib:ICLR16:Han} and ADMM \cite{bib:ECCV18:Zhang2} rely on sparse tensors, which need special libraries or hardwares for execution \cite{bib:ICLR17:Li}.
Their operands (the shared quantized values) are still floating-point.
Hence they hardly utilize bitwise operations for speedup.
In contrast, \sysname achieves a higher compression rate without sparse tensors, which is more suited for general off-the-shelf platforms.

The average bitwidth of \sysname is below $1.0$-bit ($1.0$-bit corresponds to a compression rate slightly below $32$), indicating some groups are fully removed. 
In fact, this process leads to a new network architecture containing less output channels of each layer, and thus the corresponding input channels of the next layers can be safely removed.
The original configuration $20-50-500-10$ is now $18-45-231-10$.

\subsubsection{Binary Networks on CIFAR10}
\label{sec:cifar10}
\fakeparagraph{Settings}
In this experiment, we compare the performance of \sysname with state-of-the-art binary networks \cite{bib:NIPS15:Courbariaux,bib:ECCV16:Rastegari,bib:ICLR17:Hou}.
A binary network is an MBN with the lowest bitwidth, \ie single-bit. 
Thus, the storage consumption of a binary network can be regarded as the lower bound of a (multi-bit) binary network. 
For a fair comparison, we implement a small version of VGG from~\cite{bib:ICLR15:Simonyan} on CIFAR10 dataset~\cite{cifar10}, as in many state-of-the-art binary networks~\cite{bib:NIPS15:Courbariaux, bib:ICLR17:Hou, bib:ECCV16:Rastegari}.

\begin{table}[htbp!]
  \centering
  \caption{Comparison with State-of-the-Art Binary Networks (VGG on CIFAR10).} 
  \label{tab:cifar10}
  \small
  \begin{tabular}{cccc}
  \toprule
  Method                                & $I_W$             & Weights~(CR)                          & Top-1             \\  \hline
  FP                                    & 32                & 56.09MB~(1$\times$)                   & 92.8\%            \\
  BC~\cite{bib:NIPS15:Courbariaux}      & 1                 & 1.75MB~(32$\times$)                   & 90.1\%            \\
  BWN~\cite{bib:ECCV16:Rastegari}*      & 1                 & 1.82MB~(31$\times$)                   & 90.1\%            \\
  LAB~\cite{bib:ICLR17:Hou}             & 1                 & 1.77MB~(32$\times$)                   & 89.5\%            \\
  AQ~\cite{bib:ICLR18:Khoram}           & 0.27              & 1.60MB~(35$\times$)                   & 90.9\%            \\
  \textbf{\sysname}                     & \textbf{0.66}     & \textbf{1.29MB}~(\textbf{43$\times$}) & \textbf{92.0\%}   \\
  \textbf{\sysname}                     & \textbf{0.40}     & \textbf{0.82MB}~(\textbf{68$\times$}) & \textbf{90.9\%}   \\ 
  \bottomrule
  \end{tabular}
  \begin{tablenotes}
    \item
    *: both first and last layers are unquantized.
  \end{tablenotes}
\end{table}

\fakeparagraph{Results}
\tabref{tab:cifar10} shows the performance comparison to popular binary networks.
$I_W$ stands for the quantization bitwidth for weights.
Since \sysname has an adaptive quantization bitwidth, the reported bitwidth of \sysname is an average bitwidth of all weights.
For the statistic information, we plot multiple training loss curves in Appendix~\ref{sec:vggappendix}.

\sysname allows to compress the network to under $1$-bit, which remarkably reduces the storage and computation. 
\sysname achieves the smallest weight storage and the highest accuracy compared to all weights binarization methods BC~\cite{bib:NIPS15:Courbariaux}, BWN~\cite{bib:ECCV16:Rastegari}, LAB~\cite{bib:ICLR17:Hou}.
Similar to results on LeNet5, \sysname generates a new network architecture with fewer output channels per layer, which further reduces our models in \tabref{tab:cifar10} to $1.01$MB ($0.66$-bit) or even $0.62$MB ($0.40$-bit).
The computation and the run-time memory can also decrease. 

Furthermore, we also compare with AQ~\cite{bib:ICLR18:Khoram}, the state-of-the-art adaptive fixed-point quantizer.
It assigns a different bitwidth for each parameter based on its sensitivity, and also realizes a pruning for 0-bit parameters. 
Our \sysname not only consumes less storage, but also acquires a higher accuracy than AQ~\cite{bib:ICLR18:Khoram}.
Besides, the non-standard quantization bitwidth in AQ cannot efficiently run on general hardware due to the irregularity~\cite{bib:ICLR18:Khoram}, which is not the case for \sysname.

\subsubsection{MBNs on ILSVRC12}
\label{sec:imagenet}
\fakeparagraph{Settings}
We quantize both the weights and the activations of ResNet18/34~\cite{bib:CVPR16:He} with a low bitwidth ($\leq2$-bit) on ILSVRC12 dataset~\cite{ILSVRC15}, and compare our results with state-of-the-art multi-bit networks. 
The results for the full precision version are provided by Pytorch~\cite{bib:NIPSWorkshop17:Paszke}.
We choose ResNet18, as it is a popular model on ILSVRC12 used in the previous quantization schemes. 
ResNet34 is a deeper network used more in recent quantization papers.  

\fakeparagraph{Results}
\tabref{tab:ResNet} shows that \sysname obtains the highest accuracy with the smallest network size on ResNet18/34, in comparison with other weight and weight+activation quantization approaches.
$I_W$ and $I_A$ are the quantization bitwidth for weights and activations respectively.

Several schemes (marked with *) are not able to quantize the first and last layers, since quantizing both layers as other layers will cause a huge accuracy degradation \cite{bib:ECCV18:Wan,bib:ICLR18:Mishra2}. 
It is worth noting that the first and last layers with floating-point values occupy $2.09$MB storage in ResNet18/34, which is still a significant storage consumption on such a low-bit network. 
We can simply observe this enormous difference between TWN~\cite{bib:NIPS16:Li} and LQ-Net~\cite{bib:ECCV18:Zhang} in~\tabref{tab:ResNet} for example. 
The evolved floating-point computations in both layers can hardly be accelerated with bitwise operations either. 

For reported \sysname models in~\tabref{tab:ResNet}, as several layers have already been pruned to an average bitwidth below $1.0$-bit (\eg in \figref{fig:adapt}), we add extra 50 epochs of our \textit{STE with loss-aware} in the end (see in \secref{sec:convergence}).
The final accuracy is further boosted (see the results marked with $^\mathrm{e}$). 
\sysname can quantize ResNet18/34 with 2.00-bit (across all layers) \textit{without any accuracy loss}.
To the best of our knowledge, this is the first time that the 2-bit weight-quantized ResNet18/34 can achieve the accuracy level of its full precision version, even if some prior schemes keep the first and last layers unquantized.
These results further demonstrate the high-performance of the pipeline in \sysname.

\begin{table}[tbp!]
\centering
\caption{Comparison with State-of-the-Art Multi-bit Networks (ResNet18/34 on ILSVRC12).}
\label{tab:ResNet}
\small
\begin{tabular}{cccc}
\toprule
Method                                      & $I_W$/$I_A$                 & Weights                 & Top-1                     \\ \hline
\multicolumn{4}{c}{ResNet18}                                                                                                    \\ \hdashline
FP~\cite{bib:NIPSWorkshop17:Paszke}         & 32/32                       & 46.72MB                 & 69.8\%                    \\
TWN~\cite{bib:NIPS16:Li}                    & 2/32                        & 2.97MB                  & 61.8\%                    \\
LR~\cite{bib:ICLR18:Shayer}                 & 2/32                        & 4.84MB                  & 63.5\%                    \\
LQ~\cite{bib:ECCV18:Zhang}*                 & 2/32                        & 4.91MB                  & 68.0\%                    \\
QIL~\cite{bib:CVPR19:Jung}*                 & 2/32                        & 4.88MB                  & 68.1\%                    \\
INQ~\cite{bib:ICLR17:Zhou}                  & 3/32                        & 4.38MB                  & 68.1\%                    \\
ABC~\cite{bib:NIPS17:Lin}                   & 5/32                        & 7.41MB                  & 68.3\%                    \\
\textbf{\sysname}                           & \textbf{2.00/32}            & \textbf{3.44MB}         & \textbf{68.9\%}           \\
\textbf{\sysname}$^\mathrm{e}$              & \textbf{2.00/32}            & \textbf{3.44MB}         & \textbf{70.0\%}           \\
BWN~\cite{bib:ECCV16:Rastegari}*            & 1/32                        & 3.50MB                  & 60.8\%                    \\
LR~\cite{bib:ICLR18:Shayer}*                & 1/32                        & 3.48MB                  & 59.9\%                    \\
DSQ~\cite{bib:ICCV19:Gong}*                 & 1/32                        & 3.48MB                  & 63.7\%                    \\
\textbf{\sysname}                           & \textbf{1.01/32}            & \textbf{1.77MB}         & \textbf{65.6\%}           \\
\textbf{\sysname}$^\mathrm{e}$              & \textbf{1.01/32}            & \textbf{1.77MB}         & \textbf{67.7\%}           \\
LQ~\cite{bib:ECCV18:Zhang}*                 & 2/2                         & 4.91MB                  & 64.9\%                    \\
QIL~\cite{bib:CVPR19:Jung}*                 & 2/2                         & 4.88MB                  & 65.7\%                    \\
DSQ~\cite{bib:ICCV19:Gong}*                 & 2/2                         & 4.88MB                  & 65.2\%                    \\
GroupNet~\cite{bib:CVPR19:Zhuang}*          & 4/1                         & 7.67MB                  & 66.3\%                    \\
RQ~\cite{bib:ICLR19:Louizos}                & 4/4                         & 5.93MB                  & 62.5\%                    \\
ABC~\cite{bib:NIPS17:Lin}                   & 5/5                         & 7.41MB                  & 65.0\%                    \\
\textbf{\sysname}                           & \textbf{2.00/2}             & \textbf{3.44MB}         & \textbf{66.4\%}           \\
SYQ~\cite{bib:CVPR18:Faraone}*              & 1/8                         & 3.48MB                  & 62.9\%                    \\
LQ~\cite{bib:ECCV18:Zhang}*                 & 1/2                         & 3.50MB                  & 62.6\%                    \\
PACT~\cite{bib:arXiv18:Choi}*               & 1/2                         & 3.48MB                  & 62.9\%                    \\
\textbf{\sysname}                           & \textbf{1.01/2}             & \textbf{1.77MB}         & \textbf{63.2\%}           \\ \hline
\multicolumn{4}{c}{ResNet34}                                                                                                    \\ \hdashline
FP~\cite{bib:NIPSWorkshop17:Paszke}         & 32/32                       & 87.12MB                 & 73.3\%                    \\
\textbf{\sysname}$^\mathrm{e}$              & \textbf{2.00/32}            & \textbf{6.37MB}         & \textbf{73.6\%}           \\
\textbf{\sysname}$^\mathrm{e}$              & \textbf{1.00/32}            & \textbf{3.29MB}         & \textbf{72.5\%}           \\
LQ~\cite{bib:ECCV18:Zhang}*                 & 2/2                         & 7.47MB                  & 69.8\%                    \\
QIL~\cite{bib:CVPR19:Jung}*                 & 2/2                         & 7.40MB                  & 70.6\%                    \\
DSQ~\cite{bib:ICCV19:Gong}*                 & 2/2                         & 7.40MB                  & 70.0\%                    \\
GroupNet~\cite{bib:CVPR19:Zhuang}*          & 5/1                         & 12.71MB                 & 70.5\%                    \\
ABC~\cite{bib:NIPS17:Lin}                   & 5/5                         & 13.80MB                 & 68.4\%                    \\
\textbf{\sysname}                           & \textbf{2.00/2}             & \textbf{6.37MB}         & \textbf{71.0\%}           \\ 
TBN~\cite{bib:ECCV18:Wan}*                  & 1/2                         & 4.78MB                  & 58.2\%                    \\
LQ~\cite{bib:ECCV18:Zhang}*                 & 1/2                         & 4.78MB                  & 66.6\%                    \\
\textbf{\sysname}                           & \textbf{1.00/2}             & \textbf{3.29MB}         & \textbf{67.4\%}           \\ 
\bottomrule
\end{tabular}
\begin{tablenotes}
    \item
    *: both first and last layers are unquantized.
    \item
    $^\mathrm{e}$: adding extra epochs of \textit{STE with loss-aware} in the end.
\end{tablenotes}
\end{table}

\section{Conclusion}
\label{sec:conclusion}
In this paper, we propose a novel loss-aware trained quantizer for multi-bit networks, which realizes an adaptive bitwidth for all layers (w.r.t. the loss). 
The experiments on current open datasets reveal that \sysname outperforms state-of-the-art multi-bit/binary networks in both accuracy and storage. 
Currently, we are deploying \sysname on a mobile platform to measure the inference efficiency.

\section*{Acknowledgement}
We are grateful for the anonymous reviewers and area chairs for their valuable comments and suggestions.
This research was supported in part by the Singapore Ministry of Education (MOE) Academic Research Fund (AcRF) Tier 1 grant.
Zimu Zhou is the corresponding author.

\newpage 

{\small
\bibliographystyle{ieee_fullname}
\bibliography{cites}

\begin{thebibliography}{10}\itemsep=-1pt

\bibitem{bib:MICRO17:Albericio}
Jorge Albericio, Alberto Delm\'{a}s, Patrick Judd, Sayeh Sharify, Gerard
  O'Leary, Roman Genov, and Andreas Moshovos.
\newblock Bit-pragmatic deep neural network computing.
\newblock In {\em Proceedings of the 50th Annual IEEE/ACM International
  Symposium on Microarchitecture}, pages 382--394, 2017.

\bibitem{bib:arXiv18:Choi}
Jungwook Choi, Zhuo Wang, Swagath Venkataramani, Pierce~I{-}Jen Chuang,
  Vijayalakshmi Srinivasan, and Kailash Gopalakrishnan.
\newblock {PACT:} parameterized clipping activation for quantized neural
  networks.
\newblock {\em arXiv preprint arXiv:1805.06085}, abs/1805.06085, 2018.

\bibitem{bib:NIPS15:Courbariaux}
Matthieu Courbariaux, Yoshua Bengio, and Jean-Pierre David.
\newblock Binaryconnect: Training deep neural networks with binary weights
  during propagations.
\newblock In {\em Proceedings of Advances in Neural Information Processing
  Systems}, pages 3123--3131, 2015.

\bibitem{bib:arXiv16:Courbariaux}
Matthieu Courbariaux, Itay Hubara, Daniel Soudry, Ran El-Yaniv, and Yoshua
  Bengio.
\newblock Binarized neural networks: Training deep neural networks with weights
  and activations constrained to +1 or -1.
\newblock {\em arXiv preprint arXiv:1602.02830}, 2016.

\bibitem{bib:JMLR11:Duchi}
John Duchi, Elad Hazan, and Yoram Singer.
\newblock Adaptive subgradient methods for online learning and stochastic
  optimization.
\newblock {\em Journal of Machine Learning Research}, 12(Jul):2121--2159, 2011.

\bibitem{bib:CVPR18:Faraone}
Julian Faraone, Nicholas~J. Fraser, Michaela Blott, and Philip Heng~Wai Leong.
\newblock {SYQ:} learning symmetric quantization for efficient deep neural
  networks.
\newblock In {\em Proceedings of IEEE Conference on Computer Vision and Pattern
  Recognition}, pages 4300--4309, 2018.

\bibitem{bib:ICCV19:Gong}
Ruihao Gong, Xianglong Liu, Shenghu Jiang, Tianxiang Li, Peng Hu, Jiazhen Lin,
  Fengwei Yu, and Junjie Yan.
\newblock Differentiable soft quantization: Bridging full-precision and low-bit
  neural networks.
\newblock In {\em Proceedings of International Conference in Computer Vision},
  2019.

\bibitem{bib:arXiv14:Gong}
Yunchao Gong, Liu Liu, Ming Yang, and Lubomir Bourdev.
\newblock Compressing deep convolutional networks using vector quantization.
\newblock {\em arXiv preprint arXiv:1412.6115}, 2014.

\bibitem{bib:CVPR17:Guo}
Yiwen Guo, Anbang Yao, Hao Zhao, and Yurong Chen.
\newblock Network sketching: exploiting binary structure in deep cnns.
\newblock In {\em Proceedings of IEEE Conference on Computer Vision and Pattern
  Recognition}, pages 5955--5963, 2017.

\bibitem{bib:ICLR16:Han}
Song Han, Huizi Mao, and William~J Dally.
\newblock Deep compression: Compressing deep neural networks with pruning,
  trained quantization and huffman coding.
\newblock In {\em Proceedings of International Conference on Learning
  Representations}, 2016.

\bibitem{bib:CVPR16:He}
Kaiming He, Xiangyu Zhang, Shaoqing Ren, and Jian Sun.
\newblock Deep residual learning for image recognition.
\newblock In {\em Proceedings of IEEE Conference on Computer Vision and Pattern
  Recognition}, pages 770--778, 2016.

\bibitem{bib:NIPS14:Hinton}
Geoffrey Hinton, Oriol Vinyals, and Jeff Dean.
\newblock Distilling the knowledge in a neural network.
\newblock In {\em Proceedings of NIPS Deep Learning Workshop}, 2014.

\bibitem{bib:ICLR18:Hou}
Lu Hou and James~T Kwok.
\newblock Loss-aware weight quantization of deep networks.
\newblock In {\em Proceedings of International Conference on Learning
  Representations}, 2018.

\bibitem{bib:ICLR17:Hou}
Lu Hou, Quanming Yao, and James~T Kwok.
\newblock Loss-aware binarization of deep networks.
\newblock In {\em Proceedings of International Conference on Learning
  Representations}, 2017.

\bibitem{bib:AAAI18:Hu}
Qinghao Hu, Peisong Wang, and Jian Cheng.
\newblock From hashing to cnns: Training binary weight networks via hashing.
\newblock In {\em Proceedings of AAAI Conference on Artificial Intelligence},
  2018.

\bibitem{bib:JMLR17:Hubara}
Itay Hubara, Matthieu Courbariaux, Daniel Soudry, Ran El-Yaniv, and Yoshua
  Bengio.
\newblock Quantized neural networks: Training neural networks with low
  precision weights and activations.
\newblock {\em Journal of Machine Learning Research}, 18(187):1--30, 2017.

\bibitem{bib:CVPR19:Jung}
Sangil Jung, Changyong Son, Seohyung Lee, Jinwoo Son, Jae-Joon Han, Youngjun
  Kwak, Sung~Ju Hwang, and Changkyu Choi.
\newblock Learning to quantize deep networks by optimizing quantization
  intervals with task loss.
\newblock In {\em Proceedings of IEEE Conference on Computer Vision and Pattern
  Recognition}, pages 4350--4359, 2019.

\bibitem{bib:ICLR18:Khoram}
Soroosh Khoram and Jing Li.
\newblock Adaptive quantization of neural networks.
\newblock In {\em Proceedings of International Conference on Learning
  Representations}, 2018.

\bibitem{cifar10}
Alex Krizhevsky, Vinod Nair, and Geoffrey Hinton.
\newblock Cifar-10 (canadian institute for advanced research).
\newblock \url{http://www.cs.toronto.edu/~kriz/cifar.html}.

\bibitem{bib:NIPS12:Krizhevsky}
Alex Krizhevsky, Ilya Sutskever, and Geoffrey~E Hinton.
\newblock Imagenet classification with deep convolutional neural networks.
\newblock In {\em Proceedings of Advances in Neural Information Processing
  Systems}, pages 1097--1105, 2012.

\bibitem{bib:PIEEE98:LeCun}
Yann LeCun, L{\'e}on Bottou, Yoshua Bengio, Patrick Haffner, et~al.
\newblock Gradient-based learning applied to document recognition.
\newblock {\em Proceedings of the IEEE}, 86(11):2278--2324, 1998.

\bibitem{bib:MNIST}
Yann LeCun and Corinna Cortes.
\newblock {MNIST} handwritten digit database.
\newblock \url{http://yann.lecun.com/exdb/mnist/}, 2010.

\bibitem{bib:NIPS16:Li}
Fengfu Li, Bo Zhang, and Bin Liu.
\newblock Ternary weight networks.
\newblock In {\em Proceedings of Advances in Neural Information Processing
  Systems}, 2016.

\bibitem{bib:ICLR17:Li}
Hao Li, Asim Kadav, Igor Durdanovic, Hanan Samet, and Hans~Peter Graf.
\newblock Pruning filters for efficient convnets.
\newblock In {\em Proceedings of International Conference on Learning
  Representations}, 2017.

\bibitem{bib:ICML16:Lin}
Darryl Lin, Sachin Talathi, and Sreekanth Annapureddy.
\newblock Fixed point quantization of deep convolutional networks.
\newblock In {\em Proceedings of International Conference on Machine Learning},
  pages 2849--2858, 2016.

\bibitem{bib:NIPS17:Lin}
Xiaofan Lin, Cong Zhao, and Wei Pan.
\newblock Towards accurate binary convolutional neural network.
\newblock In {\em Proceedings of Advances in Neural Information Processing
  Systems}, pages 345--353, 2017.

\bibitem{bib:ICLR19:Louizos}
Christos Louizos, Matthias Reisser, Tijmen Blankevoort, Efstratios Gavves, and
  Max Welling.
\newblock Relaxed quantization for discretized neural networks.
\newblock In {\em Proceedings of International Conference on Learning
  Representations}, 2019.

\bibitem{bib:ICLR18:Mishra2}
Asit Mishra and Debbie Marr.
\newblock Apprentice: Using knowledge distillation techniques to improve
  low-precision network accuracy.
\newblock In {\em Proceedings of International Conference on Learning
  Representations}, 2018.

\bibitem{bib:ICLR18:Mishra}
Asit Mishra, Eriko Nurvitadhi, Jeffrey~J Cook, and Debbie Marr.
\newblock {WRPN}: Wide reduced-precision networks.
\newblock In {\em Proceedings of International Conference on Learning
  Representations}, 2018.

\bibitem{bib:NIPSWorkshop17:Paszke}
Adam Paszke, Sam Gross, Soumith Chintala, Gregory Chanan, Edward Yang, Zachary
  DeVito, Zeming Lin, Alban Desmaison, Luca Antiga, and Adam Lerer.
\newblock Automatic differentiation in pytorch.
\newblock In {\em Proceedings of NIPS Autodiff Workshop: The Future of
  Gradient-based Machine Learning Software and Techniques}, 2017.

\bibitem{bib:ICLR18:Pedersoli}
Fabrizio Pedersoli, George Tzanetakis, and Andrea Tagliasacchi.
\newblock Espresso: Efficient forward propagation for binary deep neural
  networks.
\newblock In {\em Proceedings of International Conference on Learning
  Representations}, 2018.

\bibitem{torchLeNet5}
Pytorch.
\newblock Pytorch example of lenet-5 on mnist.
\newblock \url{https://github.com/pytorch/examples/blob/master/mnist/main.py}.
\newblock Accessed: 2019-09-28.

\bibitem{torchCIFAR10}
Pytorch.
\newblock Pytorch example on cifar10.
\newblock \url{https://github.com/kuangliu/pytorch-cifar/blob/master/main.py}.
\newblock Accessed: 2019-10-08.

\bibitem{torchIMAGENET}
Pytorch.
\newblock Pytorch example on imagenet.
\newblock
  \url{https://github.com/pytorch/examples/blob/master/imagenet/main.py}.
\newblock Accessed: 2019-09-24.

\bibitem{torchResNet}
Pytorch.
\newblock Pytorch example on resnet.
\newblock
  \url{https://github.com/pytorch/vision/blob/master/torchvision/models/resnet.py}.
\newblock Accessed: 2019-10-15.

\bibitem{bib:ECCV16:Rastegari}
Mohammad Rastegari, Vicente Ordonez, Joseph Redmon, and Ali Farhadi.
\newblock Xnor-net: Imagenet classification using binary convolutional neural
  networks.
\newblock In {\em Proceedings of European Conference on Computer Vision}, pages
  525--542, 2016.

\bibitem{bib:ICLR18:Reddi}
Sashank~J. Reddi, Satyen Kale, and Sanjiv Kumar.
\newblock On the convergence of adam and beyond.
\newblock In {\em Proceedings of International Conference on Learning
  Representations}, 2018.

\bibitem{ILSVRC15}
Olga Russakovsky, Jia Deng, Hao Su, Jonathan Krause, Sanjeev Satheesh, Sean Ma,
  Zhiheng Huang, Andrej Karpathy, Aditya Khosla, Michael Bernstein,
  Alexander~C. Berg, and Li Fei-Fei.
\newblock {ImageNet Large Scale Visual Recognition Challenge}.
\newblock {\em International Journal of Computer Vision}, 115(3):211--252,
  2015.

\bibitem{bib:ICLR18:Shayer}
Oran Shayer, Dan Levi, and Ethan Fetaya.
\newblock Learning discrete weights using the local reparameterization trick.
\newblock In {\em Proceedings of International Conference on Learning
  Representations}, 2018.

\bibitem{bib:ICLR15:Simonyan}
Karen Simonyan and Andrew Zisserman.
\newblock Very deep convolutional networks for large-scale image recognition.
\newblock In {\em Proceedings of International Conference on Learning
  Representations}, 2015.

\bibitem{bib:ECCV18:Wan}
Diwen Wan, Fumin Shen, Li Liu, Fan Zhu, Jie Qin, Ling Shao, and Heng Tao~Shen.
\newblock Tbn: Convolutional neural network with ternary inputs and binary
  weights.
\newblock In {\em Proceedings of European Conference on Computer Vision}, 2018.

\bibitem{bib:ICLR18:Xu}
Chen Xu, Jianqiang Yao, Zhouchen Lin, Wenwu Ou, Yuanbin Cao, Zhirong Wang, and
  Hongbin Zha.
\newblock Alternating multi-bit quantization for recurrent neural networks.
\newblock In {\em Proceedings of International Conference on Learning
  Representations}, 2018.

\bibitem{bib:ECCV18:Zhang}
Dongqing Zhang, Jiaolong Yang, Dongqiangzi Ye, and Gang Hua.
\newblock Lq-nets: Learned quantization for highly accurate and compact deep
  neural networks.
\newblock In {\em Proceedings of European Conference on Computer Vision}, pages
  365--382, 2018.

\bibitem{bib:ECCV18:Zhang2}
Tianyun Zhang, Shaokai Ye, Kaiqi Zhang, Jian Tang, Wujie Wen, Makan Fardad, and
  Yanzhi Wang.
\newblock A systematic dnn weight pruning framework using alternating direction
  method of multipliers.
\newblock In {\em Proceedings of European Conference on Computer Vision}, pages
  184--199, 2018.

\bibitem{bib:ICLR17:Zhou}
Aojun Zhou, Anbang Yao, Yiwen Guo, Lin Xu, and Yurong Chen.
\newblock Incremental network quantization: Towards lossless cnns with
  low-precision weights.
\newblock In {\em Proceedings of International Conference on Learning
  Representations}, 2017.

\bibitem{bib:CVPR18:Zhou}
Aojun Zhou, Anbang Yao, Kuan Wang, and Yurong Chen.
\newblock Explicit loss-error-aware quantization for low-bit deep neural
  networks.
\newblock In {\em Proceedings of IEEE Conference on Computer Vision and Pattern
  Recognition}, pages 9426--9435, 2018.

\bibitem{bib:arXiv16:Zhou}
Shuchang Zhou, Yuxin Wu, Zekun Ni, Xinyu Zhou, He Wen, and Yuheng Zou.
\newblock Dorefa-net: Training low bitwidth convolutional neural networks with
  low bitwidth gradients.
\newblock {\em arXiv preprint arXiv:1606.06160}, 2016.

\bibitem{bib:CVPR19:Zhuang}
Bohan Zhuang, Chunhua Shen, Mingkui Tan, Lingqiao Liu, and Ian Reid.
\newblock Structured binary neural networks for accurate image classification
  and semantic segmentation.
\newblock In {\em Proceedings of IEEE Conference on Computer Vision and Pattern
  Recognition}, 2019.

\end{thebibliography}
}

\newpage
\section*{Appendix}

\appendix

\section{\sysname Initialization}
\label{sec:initialization}

\subsection{Initialization Algorithm}
\label{sec:initalgorithm}

We adapt the network sketching in~\cite{bib:CVPR17:Guo}, and propose a structured sketching algorithm below for \sysname initialization (see Alg.~\ref{alg:SS})\footnote{Circled operation in Alg.~\ref{alg:SS} means elementwise operations.}. 
Here, the subscript of the layer index $l$ is reintroduced for a layerwise elaboration in the pseudocode.
This algorithm partitions the pretrained full precision weights $\bm{w}_l$ of the $l^{\text{th}}$ layer into $G_l$ groups with the structures mentioned in \ref{sec:group}. 
The vectorized weights $\bm{w}_{l,g}$ of each group are quantized with $I_{l,g}$ linear independent binary bases (\ie column vectors in $\bm{B}_{l,g}$) and corresponding coordinates $\bm{\alpha}_{l,g}$ to minimize the reconstruction error. 
This algorithm initializes the matrix of binary bases $\bm{B}_{l,g}$, the vector of floating-point coordinates $\bm{\alpha}_{l,g}$, and the scalar of integer bitwidth $I_{l,g}$ in each group across layers.
The initial reconstruction error is upper bounded by a threshold $\sigma$. 
In addition, a maximum bitwidth of each group is defined as $\mathrm{I_{max}}$.
Both of these two parameters determine the initial bitwidth $I_{l,g}$.

\begin{algorithm}[!htbp]
\caption{Structured Sketching of Weights}\label{alg:SS}
\KwIn{$\{\bm{w}_l\}_{l=1}^{L}$, $\{G_l\}_{l=1}^{L}$, $\mathrm{I_{max}}$, $\sigma$}
\KwOut{$\{\{\bm{\alpha}_{l,g},\bm{B}_{l,g}, I_{l,g}\}_{g=1}^{G_l}\}_{l=1}^{L}$}
\For {$l\leftarrow 1$ \KwTo $L$} {
  \For {$g \leftarrow 1$ \KwTo $G_l$} {
    \textbf{Fetch and vectorize} $\bm{w}_{l,g}$ \textbf{from} $\bm{w}_l$ \;
    \textbf{Initialize} $\bm{\epsilon} = \bm{w}_{l,g}$, $i=0$ \;
    $\bm{B}_{l,g} = [~]$ \;
    \While{$\|\bm{\epsilon}\oslash\bm{w}_{l,g}\|_2^2>\sigma$ \textbf{\texttt{\textup{and}}} $i<\mathrm{I_{max}}$} 
    {
      $i = i+1$\;
      $\bm{\beta}_{i} = \mathrm{sign}(\bm{\epsilon})$\;
      $\bm{B}_{l,g} = [\bm{B}_{l,g}, \bm{\beta}_{i}]$\;
      \tcc{Find the optimal point spanned by $\bm{B}_{l,g}$}
      $\bm{\alpha}_{l,g} = (\bm{B}_{l,g}^\mathrm{T}\bm{B}_{l,g})^{-1}\bm{B}_{l,g}^\mathrm{T}\bm{w}_{l,g}$ \;
       \tcc{Update the residual reconstruction error}
      $\bm{\epsilon} = \bm{w}_{l,g}-\bm{B}_{l,g}\bm{\alpha}_{l,g}$ \;
   }
    $I_{l,g}=i$\;
  }
}
\end{algorithm}

\begin{theorem}
The column vectors in $\bm{B}_{l,g}$ are linear independent.
\end{theorem}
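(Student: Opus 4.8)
The plan is to prove the statement by induction on the iteration index $i$ of the \texttt{while} loop in Alg.~\ref{alg:SS}, maintaining a two-part invariant: after the $i$-th iteration, (i) the columns $\bm{\beta}_1,\dots,\bm{\beta}_i$ of $\bm{B}_{l,g}$ are linearly independent, and (ii) the current residual satisfies $\bm{B}_{l,g}^\mathrm{T}\bm{\epsilon}=\bm{0}$. Part (ii) is what drives the argument, and part (i) simultaneously guarantees that the inverse $(\bm{B}_{l,g}^\mathrm{T}\bm{B}_{l,g})^{-1}$ appearing in the coordinate update is well defined, since full column rank is equivalent to invertibility of $\bm{B}_{l,g}^\mathrm{T}\bm{B}_{l,g}$.

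First I would establish the orthogonality fact (ii). The update $\bm{\alpha}_{l,g}=(\bm{B}_{l,g}^\mathrm{T}\bm{B}_{l,g})^{-1}\bm{B}_{l,g}^\mathrm{T}\bm{w}_{l,g}$ is exactly the least-squares solution of $\min_{\bm{\alpha}}\|\bm{w}_{l,g}-\bm{B}_{l,g}\bm{\alpha}\|_2^2$, so it obeys the normal equations $\bm{B}_{l,g}^\mathrm{T}(\bm{w}_{l,g}-\bm{B}_{l,g}\bm{\alpha}_{l,g})=\bm{0}$. Because the algorithm sets $\bm{\epsilon}=\bm{w}_{l,g}-\bm{B}_{l,g}\bm{\alpha}_{l,g}$, this reads $\bm{B}_{l,g}^\mathrm{T}\bm{\epsilon}=\bm{0}$: the residual is the component of $\bm{w}_{l,g}$ orthogonal to the span of the current bases.

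Next comes the inductive step for (i). For the base case, $\bm{\beta}_1=\mathrm{sign}(\bm{w}_{l,g})$ is a $\pm 1$ vector and hence nonzero, so a single column is trivially independent. Assuming $\bm{\beta}_1,\dots,\bm{\beta}_i$ are independent, I want to show the next column $\bm{\beta}_{i+1}=\mathrm{sign}(\bm{\epsilon})$ cannot lie in $\mathrm{span}(\bm{\beta}_1,\dots,\bm{\beta}_i)$. The key quantity is $\bm{\beta}_{i+1}^\mathrm{T}\bm{\epsilon}=\mathrm{sign}(\bm{\epsilon})^\mathrm{T}\bm{\epsilon}=\|\bm{\epsilon}\|_1$. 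Since the loop body is only entered when the guard $\|\bm{\epsilon}\oslash\bm{w}_{l,g}\|_2^2>\sigma\ge 0$ holds, the residual feeding the $\mathrm{sign}$ is nonzero, hence $\bm{\beta}_{i+1}^\mathrm{T}\bm{\epsilon}=\|\bm{\epsilon}\|_1>0$. Meanwhile, invariant (ii) gives $\bm{\beta}_j^\mathrm{T}\bm{\epsilon}=0$ for every $j\le i$. Therefore, if $\bm{\beta}_{i+1}=\sum_{j\le i}c_j\bm{\beta}_j$ for some scalars $c_j$, taking the inner product with $\bm{\epsilon}$ would yield $\|\bm{\epsilon}\|_1=\sum_{j\le i}c_j\cdot 0=0$, a contradiction. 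Thus $\bm{\beta}_{i+1}\notin\mathrm{span}(\bm{\beta}_1,\dots,\bm{\beta}_i)$, the enlarged column set remains independent, and the inverse in the subsequent coordinate update stays valid, closing the induction.

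The step I expect to require the most care is pinning down that the residual is nonzero \emph{exactly} at the moment a new basis is appended, because $\mathrm{sign}(\bm{\epsilon})$ is evaluated at the top of the loop body using the value left over from the previous projection. This is settled by the loop guard together with $\sigma\ge 0$: the guard forces the relative residual to be strictly positive, so $\bm{\epsilon}\neq\bm{0}$, which is all that is needed for $\|\bm{\epsilon}\|_1>0$. Any ambiguity in $\mathrm{sign}(0)$ on individual vanishing entries is harmless, since such entries contribute $0$ to both $\bm{\beta}_{i+1}^\mathrm{T}\bm{\epsilon}$ and $\|\bm{\epsilon}\|_1$, leaving the identity $\bm{\beta}_{i+1}^\mathrm{T}\bm{\epsilon}=\|\bm{\epsilon}\|_1$ intact.
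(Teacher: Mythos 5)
Your proof is correct and follows essentially the same route as the paper's: the least-squares update makes the residual orthogonal to $\mathrm{span}(\bm{B}_{l,g})$, while $\mathrm{sign}(\bm{\epsilon})^{\mathrm{T}}\bm{\epsilon}=\|\bm{\epsilon}\|_1>0$ shows the new column cannot lie in that span. You merely package the argument as an explicit induction and add care about the loop guard and $\mathrm{sign}(0)$, which the paper leaves implicit.
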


\begin{proof}
The instruction $\bm{\alpha}_{l,g} = (\bm{B}_{l,g}^\mathrm{T}\bm{B}_{l,g})^{-1}\bm{B}_{l,g}^\mathrm{T}\bm{w}_{l,g}$ ensures $\bm{\alpha}_{l,g}$ is the optimal point in $\mathrm{span}(\bm{B}_{l,g})$ regarding the least square reconstruction error $\bm{\epsilon}$. 
Thus, $\bm{\epsilon}$ is orthogonal to $\mathrm{span}(\bm{B}_{l,g})$. 
The new basis is computed from the next iteration by $\bm{\beta}_{i}= \mathrm{sign}(\bm{\epsilon})$. 
Since $\mathrm{sign}(\bm{\epsilon})\bullet\bm{\epsilon}>0, \forall\bm{\epsilon}\ne\bm{0}$, we have $\bm{\beta}_{i}\notin \mathrm{span}(\bm{B}_{l,g})$. 
Thus, the iteratively generated column vectors in $\bm{B}_{l,g}$ are linear independent.
This also means the square matrix of $\bm{B}_{l,g}^\mathrm{T}\bm{B}_{l,g}$ is invertible.
\end{proof}

\subsection{Experiments on Group Size}
\label{sec:group}

Researchers propose different structured quantization in order to exploit the redundancy and the tolerance in the different structures.
Certainly, the weights in one layer can be arbitrarily selected to gather a group.
Considering the extra indexing cost, in general, the weights are sliced along the tensor dimensions and uniformly grouped. 

According to~\cite{bib:CVPR17:Guo}, the squared reconstruction error of a single group decays with~\equref{eq:reconstructionErrorDecay}, where $\lambda\ge0$.
\begin{equation}
\|\bm{\epsilon}\|_2^2 \le \|\bm{w}_{g}\|_2^2 (1-\frac{1}{n-\lambda})^{I_g}
\label{eq:reconstructionErrorDecay}
\end{equation}
If full precision values are stored in floating-point datatype, \ie $32$-bit, the storage compression rate in one layer can be written as,
\begin{equation}
r_s = \frac{N\times32}{I\times N+I\times32\times \frac{N}{n}}
\label{eq:r_s}
\end{equation}
where $N$ is the total number of weights in one layer; $n$ is the number of weights in each group, \ie $n = N/G$; $I$ is the average bitwidth, $I = \frac{1}{G}\sum_{g = 1}^G I_g$ .

We analyse the trade-off between the reconstruction error and the storage compression rate of different group size $n$.
We choose the pretrained AlexNet~\cite{bib:NIPS12:Krizhevsky} and VGG-16~\cite{bib:ICLR15:Simonyan}, and plot the curves of the average (per weight) reconstruction error related to the storage compression rate of each layer under different sliced structures.
We also randomly shuffle the weights in each layer, then partition them into groups with different sizes.
We select one example plot which comes from the last convolution layer ($256\times256\times3\times3$) of AlexNet~\cite{bib:NIPS12:Krizhevsky} (see~\figref{fig:conv_alexnet}).
The pretrained full precision weights are provided by Pytorch~\cite{bib:NIPSWorkshop17:Paszke}. 
\begin{figure}[htbp!]
\centering
\includegraphics[width=0.45\textwidth]{./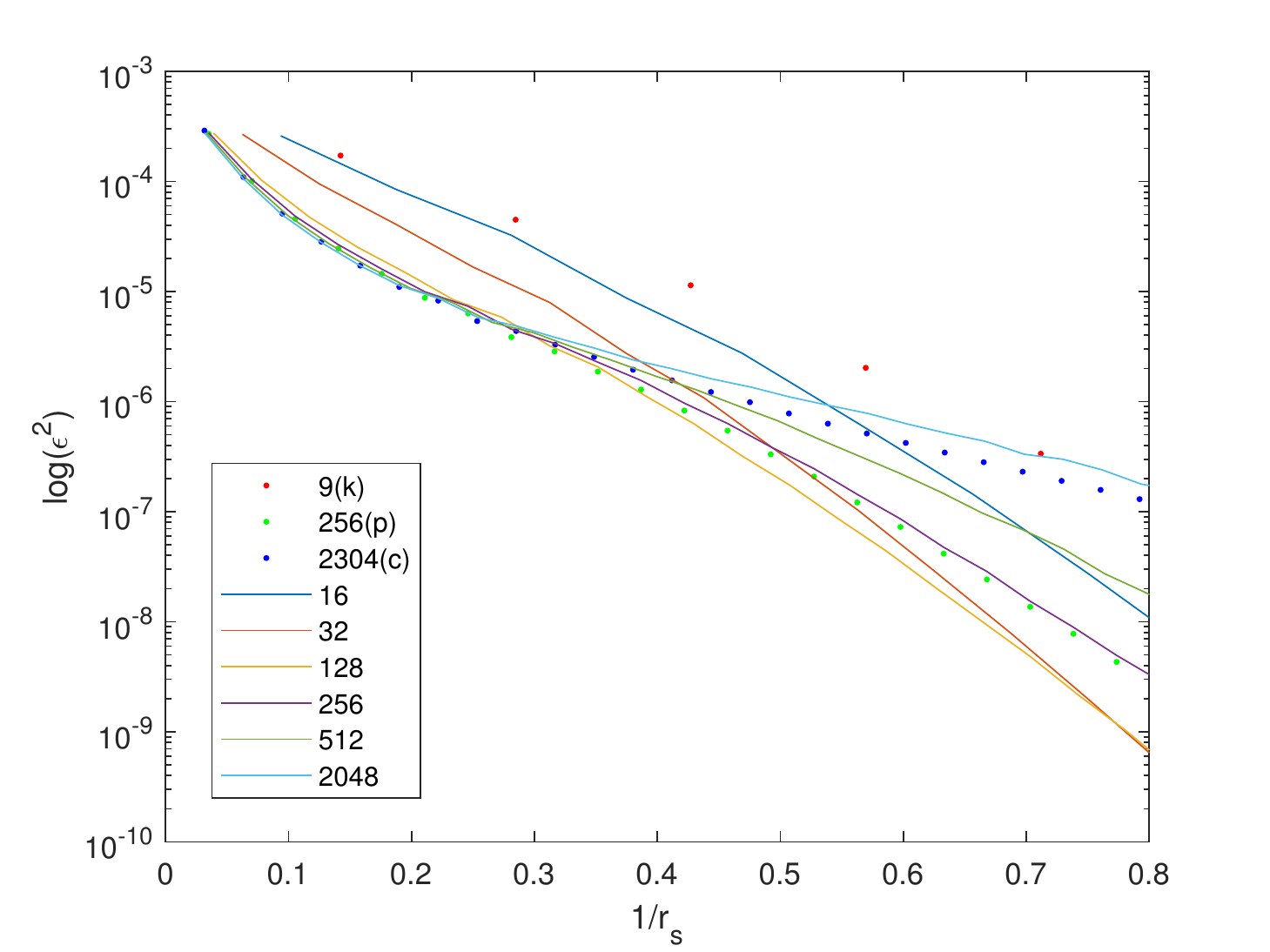}
\caption{The curves about the logarithmic L2-norm of the average reconstruction error $\mathrm{log}(\|\bm{\epsilon}\|_2^2)$ related to the reciprocal of the storage compression rate $1/r_s$ (from the last convolution layer of AlexNet). The legend demonstrates the corresponding group sizes. 'k' stands for kernel-wise; 'p' stands for pixel-wise; 'c' stands for channel-wise. }
\label{fig:conv_alexnet}
\end{figure}

We have found that there is not a significant difference between random groups and sliced groups (along original tensor dimensions).
Only the group size influences the trade-off.
We argue the reason is that one layer always contains thousands of groups, such that the points presented by these groups are roughly scattered in the $n$-dim space.
Furthermore, regarding the deployment on a 32-bit general microprocessor, the group size should be larger than 32 for efficient computation.
In short, a group size from $32$ to $512$ achieves relatively good trade-off between the reconstruction error and the storage compression. 

These above demonstrated three structures in~\figref{fig:conv_alexnet} do not involve the cross convolutional filters' computation, which leads to less run-time memory than other structures.
Accordingly, for a convolution layer, grouping in channel-wise ($\bm{w}_{c,:,:,:}$), kernel-wise ($\bm{w}_{c,d,:,:}$), and pixel-wise ($\bm{w}_{c,:,h,w}$) are appropriate.
Channel-wise $\bm{w}_{c,:}$ and subchannel-wise $\bm{w}_{c,d:d+n}$ grouping are suited for a fully connected layer.
The most frequently used structures for current popular network are pixel-wise (convolution layers) and (sub)channel-wise (fully connected layers), which exactly coincide the bit-packing approach in \cite{bib:ICLR18:Pedersoli}, and could result in a more efficient deployment.
Since many network architectures choose an integer multiple of 32 as the number of output channels in each layer, pixel-wise and (sub)channel-wise are also efficient for the current storage format in 32-bit microprocessors, \ie in 4 Bytes (32-bit integer).

\section{Pseudocode and Complexity Analysis}

\subsection{Pruning in $\bm{\alpha}$ Domain}
\label{sec:pseudocodepruning}

In each execution of Step 1 (\secref{sec:pruning}), $30\%$ of $\alpha_i$'s are pruned. 
Iterative pruning is realized in mini-batch (general 1 epoch in total). 
Due to the high complexity of sorting all $f_{\bm{\alpha},i}$, sorting is firstly executed in each layer, and the top-$k\%$ $f_{\bm{\alpha}_l,i}$ of the $l^{\text{th}}$ layer are selected to resort again for pruning.
$k$ is generally small, \eg $1$ or $0.5$, which ensures that the pruned $\alpha_i$'s in one iteration do not come from a single layer.
Again, $\bm{\alpha}_l$ is vectorized $\{\bm{\alpha}_{l,g}\}_{g=1}^{G_l}$; $\bm{B}_l$ is concatenated $\{\bm{B}_{l,g}\}_{g=1}^{G_l}$ in the $l^{\text{th}}$ layer.
There are $n_l$ weights in each group, and $G_l$ groups in the $l^{\text{th}}$ layer.

The number of total layers is usually smaller than $100$, thus, the sorting complexity mainly depends on the sorting in the layer, which has the largest $\mathrm{card}(\bm{\alpha}_l)$.
The number of the sorted element $f_{\bm{\alpha}_l,i}$, \ie $\mathrm{card}(\bm{\alpha}_l)$, is usually smaller than an order of $10^4$ for a general network in \sysname.

The pruning step in~\secref{sec:pruning} is demonstrated in Alg.~\ref{alg:pruning}. 
Here, assume that there are altogether $T$ times pruning iterations in each execution of Step 1; the total number of $\alpha_i$'s across all layers is $M_0$ before pruning, \ie
\begin{equation}
M_0 = \underset{l}{\sum}{\underset{g}{\sum}{\mathrm{card}(\bm{\alpha}_{l,g})}}
\label{eq:m0}
\end{equation}
and the desired total number of $\alpha_i$'s after pruning is $M_T$.

\begin{algorithm}[htbp!]
\caption{Pruning in $\alpha$ Domain}\label{alg:pruning}
\KwIn{$T$, $M_T$, $k$, $\{\{\bm{\alpha}_{l,g},\bm{B}_{l,g}, I_{l,g}\}_{g=1}^{G_l}\}_{l=1}^L$, Training Data}
\KwOut{$\{\{\bm{\alpha}_{l,g},\bm{B}_{l,g}, I_{l,g}\}_{g=1}^{G_l}\}_{l=1}^L$}
\textbf{Compute} $M_0$ \textbf{with} \equref{eq:m0} \;
\textbf{Determine the pruning number at each iteration} $M_p = \mathrm{round}(\frac{M_0-M_T}{T})$ \;
\For {$t \leftarrow 1$ \KwTo $T$} {
  \For {$l\leftarrow 1$ \KwTo $L$} {
    \textbf{Update} $\bm{\hat{w}}_{l,g}^t = \bm{B}_{l,g}^t\bm{\alpha}_{l,g}^t$  \;
    \textbf{Forward propagate convolution} \;
  }
  \textbf{Compute the loss} $\ell^t$ \;
  \For {$l\leftarrow L$ \KwTo $1$} {
    \textbf{Backward propagate gradient} $\frac{\partial\ell^t}{\partial\bm{\hat{w}}_{l,g}^t}$ \;
    \textbf{Compute} $\frac{\partial\ell^t}{\partial\bm{\alpha}_{l,g}^t}$ \textbf{with} \equref{eq:gradientspruning} \;
    \textbf{Update momentums of AMSGrad in $\bm{\alpha}$ domain} \; 
    \For {$\alpha_{l,i}^t$ \textup{\textbf{in}} $\bm{\alpha}_l^t$} {
      \textbf{Compute} $f_{\bm{\alpha}_l,i}^t$ \textbf{with} \equref{eq:taylorpruning} \;
    }
    \textbf{Sort and select Top-}$k\%$ $f_{\bm{\alpha}_l,i}^t$ \textbf{in ascending order} \;
  }
  \textbf{Resort the selected} $\{f_{\bm{\alpha}_l,i}^t\}_{l=1}^{L}$ \textbf{in ascending order} \;
  \textbf{Remove Top-}$M_p$ $\alpha_{l,i}^t$ \textbf{and their binary bases} \;
  \textbf{Update} $\{\{\bm{\alpha}_{l,g}^{t+1},\bm{B}_{l,g}^{t+1}, I_{l,g}^{t+1}\}_{g=1}^{G_l}\}_{l=1}^L$ \;
}
\end{algorithm}

\subsection{Optimizing Binary Bases and Coordinates}
\label{sec:appendixupdating}
Step 2 is also executed in batch training.
In Step 2 (\secref{sec:updating}), $10^{-3}$ is used as the learning rate in optimizing $\bm{B}_g$, and gradually decays in each epoch; the learning rate is set to $10^{-5}$ in optimizing $\bm{\alpha}_g$, and also gradually decays in each epoch. 

\subsubsection{Optimizing $\bm{B}_g$ with Speedup}
\label{sec:pseudocodeB}
The extra complexity related to the original AMSGrad mainly comes from two parts, \equref{eq:rowupdating} and \equref{eq:alphaIncr}.
\equref{eq:rowupdating} is also the most resource-hungry step of the whole pipeline, since it requires an exhaustive search.
For each group, \equref{eq:rowupdating} takes both time and storage complexities of $O(n2^{I_g})$, and in general $n>>I_g\geq1$.
Since $\bm{H}^q$ is a diagonal matrix, most of matrix-matrix multiplication in \equref{eq:alphaIncr} is avoided through matrix-vector multiplication and matrix-diagonalmatrix multiplication.
Thus, the time complexity trims down to $O(nI_g+nI_g^2+I_g^3+nI_g+n+n+nI_g+I_g^2) \doteq O(n(I_g^2+3I_g+2))$.
In our settings, optimizing $\bm{B}_g$ with speedup usually takes around twice as long as optimizing $\bm{\alpha}_g$ (\ie the original AMSGrad step).

Optimizing $\bm{B}_g$ with speedup (\secref{sec:updating}) is presented in Alg.~\ref{alg:optimizingbases}. 
Assume that there are altogether $Q$ iterations. 
It is worth noting that the bitwidth $I_{l,g}$ does not change in this step; only the binary bases $\bm{B}_{l,g}$ and the coordinates $\bm{\alpha}_{l,g}$ are updated over $Q$ iterations.  

\begin{algorithm}[htbp!]
\caption{Optimizing $\bm{B}_g$ with Speedup}\label{alg:optimizingbases}
\KwIn{$Q$, $\{\{\bm{\alpha}_{l,g},\bm{B}_{l,g}, I_{l,g}\}_{g=1}^{G_l}\}_{l=1}^L$, Training Data}
\KwOut{$\{\{\bm{\alpha}_{l,g},\bm{B}_{l,g}, I_{l,g}\}_{g=1}^{G_l}\}_{l=1}^L$}
\For {$q \leftarrow 1$ \KwTo $Q$} {
  \For {$l\leftarrow 1$ \KwTo $L$} {
    \textbf{Update} $\bm{\hat{w}}_{l,g}^q = \bm{B}_{l,g}^q\bm{\alpha}_{l,g}^q$  \;
    \textbf{Forward propagate convolution} \;
  }
  \textbf{Compute the loss} $\ell^q$ \;
  \For {$l\leftarrow L$ \KwTo $1$} {
    \textbf{Backward propagate gradient} $\frac{\partial\ell^q}{\partial\bm{\hat{w}}_{l,g}^q}$ \;
    \textbf{Update momentums of AMSGrad} \; 
    \For {$g \leftarrow 1$ \KwTo $G_l$} {
      \textbf{Compute all values of} \equref{eq:comb} \;
      \For {$j \leftarrow 1$ \KwTo $n_l$} {
        \textbf{Update} $\bm{B}_{l,g,j}^{q+1}$ \textbf{according to the nearest value} (\textbf{see} \equref{eq:rowupdating}) \;
      }
      \textbf{Update} $\bm{\alpha}_{l,g}^{q+1}$ \textbf{with} \equref{eq:alphaIncr} \;
    }
  }
}
\end{algorithm}

\subsubsection{Optimizing $\bm{\alpha}_g$}
\label{sec:pseudocodealpha}

Since $\bm{\alpha}_g$ is floating-point value, the complexity of optimizing $\bm{\alpha}_g$ is the same as the conventional optimization step, (see Alg.~\ref{alg:optimizingcoordinates}).
Assume that there are altogether $P$ iterations. 
It is worth noting that both the bitwidth $I_{l,g}$ and the binary bases $\bm{B}_{l,g}$ do not change in this step; only the coordinates $\bm{\alpha}_{l,g}$ are updated over $P$ iterations. 

\begin{algorithm}[htbp!]
\caption{Optimizing $\bm{\alpha}_g$}\label{alg:optimizingcoordinates}
\KwIn{$P$, $\{\{\bm{\alpha}_{l,g},\bm{B}_{l,g}, I_{l,g}\}_{g=1}^{G_l}\}_{l=1}^L$, Training Data}
\KwOut{$\{\{\bm{\alpha}_{l,g},\bm{B}_{l,g}, I_{l,g}\}_{g=1}^{G_l}\}_{l=1}^L$}
\For {$p \leftarrow 1$ \KwTo $P$} {
  \For {$l\leftarrow 1$ \KwTo $L$} {
    \textbf{Update} $\bm{\hat{w}}_{l,g}^p = \bm{B}_{l,g}\bm{\alpha}_{l,g}^p$  \;
    \textbf{Forward propagate convolution} \;
  }
  \textbf{Compute the loss} $\ell^p$ \;
  \For {$l\leftarrow L$ \KwTo $1$} {
    \textbf{Backward propagate gradient} $\frac{\partial\ell^p}{\partial\bm{\hat{w}}_{l,g}^p}$ \;
    \textbf{Compute} $\frac{\partial\ell^p}{\partial\bm{\alpha}_{l,g}^p}$ \textbf{with} \equref{eq:gradientspruning} \;
    \textbf{Update momentums of AMSGrad in $\bm{\alpha}$ domain} \;
    \For {$g \leftarrow 1$ \KwTo $G_l$} {
      \textbf{Update} $\bm{\alpha}_{l,g}^{p+1}$ \textbf{with} \equref{eq:optimizingalpha} \;
    }
  }
}
\end{algorithm}

\subsection{Whole Pipeline of \sysname}
\label{sec:pipeline}
The whole pipeline of \sysname is demonstrated in Alg.~\ref{alg:pipeline}.

For the initialization, the pretrained full precision weights (model) $\{\bm{w}_l\}_{l=1}^{L}$ is required. 
Then, we need to specify the structure used in each layer, \ie the manner of grouping (for short marked with $\{G_l\}_{l=1}^{L}$).
In addition, a maximum bitwidth $\mathrm{I_{max}}$ and a threshold $\sigma$ for the residual reconstruction error also need to be determined (see more details in \ref{sec:initialization}).
After initialization, we might need to retrain the model with several epochs of \ref{sec:appendixupdating} to recover the accuracy degradation caused by the initialization. 

Then, we need to determine the number of outer iterations $R$, \ie how many times the pruning step (Step 1) is executed.
A pruning schedule $\{M^r\}_{r=1}^{R}$ is also required. 
$M^r$ determines the total number of remained $\alpha_i$'s (across all layers) after the $r^{\mathrm{th}}$ pruning step, which is also taken as the input $M_T$ in Alg.~\ref{alg:pruning}. 
For example, we can build this schedule by pruning $30\%$ of $\alpha_i$'s during each execution of Step 1, as,
\begin{equation}
M^{r+1} = M^{r}\times(1-0.3)
\label{eq:mr}
\end{equation}
with $r\in\{0,1,2,~...,~R-1\}$. $M^0$ represents the total number of $\alpha_i$'s (across all layers) after initialization.

For Step 1 (Pruning in $\bm{\alpha}$ Domain), other individual inputs include the total number of iterations $T$, and the selected percentages $k$ for sorting (see Alg.~\ref{alg:pruning}).
For Step 2 (Optimizing Binary Bases and Coordinates), the individual inputs includes the total number of iterations $Q$ in optimizing $\bm{B}_g$ (see Alg.~\ref{alg:optimizingbases}), and the total number of iterations $P$ in optimizing $\bm{\alpha}_g$ (see Alg.~\ref{alg:optimizingcoordinates}).

\begin{algorithm}[htbp!]
\caption{Adaptive Loss-aware Quantization}\label{alg:pipeline}
\KwIn{Pretrained FP Weights $\{\bm{w}_l\}_{l=1}^{L},~~~~~~~~~~~~$ Structures $\{G_l\}_{l=1}^{L}$, $\mathrm{I_{max}}$, $\sigma,~~~~~~~~~~~~~~~~~~~~~~~~~~~~$ $T$, Pruning Schedule $\{M^{r}\}_{r=1}^{R}$, $k,~~~~~~~~~~~~~~~~~~~~$ $P$, $Q$, $R$, Training Data}
\KwOut{$\{\{\bm{\alpha}_{l,g},\bm{B}_{l,g}, I_{l,g}\}_{g=1}^{G_l}\}_{l=1}^L$}
\tcc{Initialization: }
\textbf{Initialize} $\{\{\bm{\alpha}_{l,g},\bm{B}_{l,g}, I_{l,g}\}_{g=1}^{G_l}\}_{l=1}^L$ \textbf{with Alg.~\ref{alg:SS}} \;  
\For {$r \leftarrow 1$ \KwTo $R$} {
  \tcc{Step 1: }
  \textbf{Assign} $M^r$ \textbf{to the input} $M_T$ \textbf{of Alg.~\ref{alg:pruning}} \;
  \textbf{Prune in} $\bm{\alpha}$ \textbf{domain with Alg.~\ref{alg:pruning}} \;
  \tcc{Step 2: }
  \textbf{Optimize binary bases} \textbf{with Alg.~\ref{alg:optimizingbases}} \;
  \textbf{Optimize coordinates} \textbf{with Alg.~\ref{alg:optimizingcoordinates}} \;
}
\end{algorithm}

\subsection{STE with Loss-aware}
\label{sec:ste}
In this section, we provide the details of the proposed \textit{STE with loss-aware} optimizer.
The training scheme of \textit{STE with loss-aware} is similar as Optimizing $\bm{B}_g$ with Speedup (see~\ref{sec:pseudocodeB}), except that it maintains the full precision weights $\bm{w}_g$.
See the pseudocode of \textit{STE with loss-aware} in Alg.~\ref{alg:ste}.

For the layer $l$, the quantized weights $\bm{\hat{w}}_g$ is used during forward propagation.
During backward propagation, the loss gradients to the full precision weights $\frac{\partial\ell}{\partial\bm{w}_{g}}$ are directly approximated with $\frac{\partial\ell}{\partial {\bm{\hat{w}}_{g}}}$, \ie via STE in the $q^{\text{th}}$ iteration as, 
\begin{equation}
\frac{\partial\ell^q}{\partial\bm{w}_g^q}=\frac{\partial\ell^q}{\partial {\bm{\hat{w}}_g}^q}
\end{equation}
Then the first and second momentums in AMSGrad are updated with $\frac{\partial\ell^q}{\partial\bm{w}_{g}^q}$.
Accordingly, the loss increment around $\bm{w}_g^q$ is modeled as,
\begin{equation}
f_{ste}^q=(\bm{g}^q)^{\mathrm{T}}(\bm{w}_g-\bm{w}_g^q)+\frac{1}{2} (\bm{w}_g-\bm{w}_g^q)^{\mathrm{T}} \bm{H}^q (\bm{w}_g-\bm{w}_g^q)
\label{eq:steB}
\end{equation}
Since $\bm{w}_g$ is full precision, $\bm{w}_g^{q+1}$ can be directly obtained through the above AMSGrad step without projection updating,
\begin{equation}
\bm{w}_g^{q+1} = \bm{w}_g^q-({\bm{H}^q})^{-1}\bm{g}^q = \bm{w}_g^q-a^q\bm{m}^q/\sqrt{\bm{\hat{v}}^q}
\label{eq:steUpdateW}
\end{equation}
For more details about the notations, please refer to \secref{subsec:overview}.
Similarly, the loss increment caused by $\bm{B}_g$ (see \equref{eq:amsgradB} and \equref{eq:amsgradB2}) is formulated as,
\begin{equation}
\begin{split}
f_{ste,\bm{B}}^q=&~(\bm{g}^q)^{\mathrm{T}}(\bm{B}_g\bm{\alpha}_g^{q}-\bm{w}_g^q)+ \\
    &~\frac{1}{2} (\bm{B}_g\bm{\alpha}_g^{q}-\bm{w}_g^q)^{\mathrm{T}} \bm{H}^q (\bm{B}_g\bm{\alpha}_g^{q}-\bm{w}_g^q)
\label{eq:steB2}
\end{split}
\end{equation}
Thus, the $j^{\text{th}}$ row in $\bm{B}_g^{q+1}$ is updated by,  
\begin{equation}
\bm{B}_{g,j}^{q+1} = \underset{\bm{B}_{g,j}}{\mathrm{argmin}}~\|\bm{B}_{g,j}\bm{\alpha}_{g}^q-(w_{g,j}^q-g^q_j/H_{jj}^q)\|
\label{eq:steRowUpdating}
\end{equation}
In addition, the speedup step (see \equref{eq:incrw} and \equref{eq:alphaIncr}) is,
\begin{equation}
\begin{split}
\bm{\alpha}_{g}^{q+1} = -&((\bm{B}_{g}^{q+1})^{\mathrm{T}} \bm{H}^q \bm{B}_{g}^{q+1})^{-1}\times \\
                        &~((\bm{B}_{g}^{q+1})^{\mathrm{T}}(\bm{g}^q-\bm{H}^q\bm{w}^q_{g}))
\label{eq:steAlphaIncr}
\end{split}
\end{equation}
So far, the quantized weights are updated in a loss-aware manner as,
\begin{equation}
\bm{\hat{w}}_{g}^{q+1} = \bm{B}_{g}^{q+1}\bm{\alpha}_{g}^{q+1}
\end{equation}

\begin{algorithm}[htbp!]
\caption{STE with Loss-aware}\label{alg:ste}
\KwIn{$Q$, $\{\{\bm{\alpha}_{l,g},\bm{B}_{l,g}, I_{l,g}\}_{g=1}^{G_l}\}_{l=1}^L$, Training Data}
\KwOut{$\{\{\bm{\alpha}_{l,g},\bm{B}_{l,g}, I_{l,g}\}_{g=1}^{G_l}\}_{l=1}^L$}
\For {$q \leftarrow 1$ \KwTo $Q$} {
  \For {$l\leftarrow 1$ \KwTo $L$} {
    \textbf{Update} $\bm{\hat{w}}_{l,g}^q = \bm{B}_{l,g}^q\bm{\alpha}_{l,g}^q$  \;
    \textbf{Forward propagate convolution} \;
  }
  \textbf{Compute the loss} $\ell^q$ \;
  \For {$l\leftarrow L$ \KwTo $1$} {
    \textbf{Backward propagate gradient} $\frac{\partial\ell^q}{\partial\bm{\hat{w}}_{l,g}^q}$ \;
    \textbf{Directly approximate} $\frac{\partial\ell^q}{\partial\bm{w}_{l,g}^q}$ \textbf{with} $\frac{\partial\ell^q}{\partial {\bm{\hat{w}}_{l,g}}^q}$ \;
    \textbf{Update momentums of AMSGrad} \; 
    \For {$g \leftarrow 1$ \KwTo $G_l$} {
      \textbf{Update} $\bm{w}_{l,g}^{q+1}$ \textbf{with} \equref{eq:steUpdateW} \;
      \textbf{Compute all values of} \equref{eq:comb} \;
      \For {$j \leftarrow 1$ \KwTo $n_l$} {
        \textbf{Update} $\bm{B}_{l,g,j}^{q+1}$ \textbf{according to the nearest value} (\textbf{see} \equref{eq:steRowUpdating}) \;
      }
      \textbf{Update} $\bm{\alpha}_{l,g}^{q+1}$ \textbf{with} \equref{eq:steAlphaIncr} \;
    }
  }
}
\end{algorithm}

\section{Implementation Details}
\label{sec:implmentationDetails}

\subsection{LeNet5 on MNIST}
\label{sec:lenet5appendix}
The MNIST dataset~\cite{bib:MNIST} consists of $28\times28$ gray scale images from 10 digit classes. 
We use 50000 samples in the training set for training, the rest 10000 for validation, and the 10000 samples in the test set for testing. 
The test accuracy is reported, when the validation dataset has the highest top-1 accuracy. 
We use a mini-batch with size of 128. 
The used LeNet5 is a modified version of~\cite{bib:PIEEE98:LeCun}. 
For data preprocessing, we use the official example provided by~\cite{torchLeNet5}.
We use the default hyperparameters proposed in~\cite{torchLeNet5} to train LeNet5 for 100 epochs as the baseline of full precision version.

The network architecture is presented as,\\
20C5 - MP2 - 50C5 - MP2 - 500FC - 10SVM.

The structures of each layer chosen for \sysname are \textit{kernel-wise, kernel-wise, subchannel-wise(2), channel-wise} respectively. 
The \textit{subchannel-wise(2)} structure means all input channels are sliced into two groups with the same size, \ie the group size here is $800/2=400$.
After each pruning, the network is retrained to recover the accuracy degradation with 20 epochs of optimizing $\bm{B}_g$ and 10 epochs of optimizing $\bm{\alpha}_g$. 
The pruning ratio is 80\%, and 4 times pruning (Step 1) are executed after initialization in the reported experiment (\tabref{tab:lenet5}). 
In the end, \ie after the retraining of the last pruning step, we add another 50 epochs of optimizing steps (\secref{sec:updating}) to further increase the final accuracy (also applied in the following experiments of VGG and ResNet18/34).

\sysname can fast converge in the training. 
However, we observe that even after the convergence, the accuracy still continue increasing slowly along the training, which is similar as the behavior of STE-based optimizer. 
During the retraining after each pruning step, as long as the training loss is (almost) converged with a few epochs, we can further proceed the next pruning step. 
We have tested that the final accuracy level is approximately the same whether we add plenty of epochs each time to slowly recover the accuracy to the original level or not.
Thus, we choose a fixed modest number of retraining epochs after each pruning step to save the overall training time.
In fact, this benefits from the feature of \sysname, which leverages the true gradient (w.r.t. the loss) to result a fast and stable convergence.
The final added plenty of training epochs aim to further slowly regain the final accuracy level, and we use a gradually decayed learning rate in this process, \eg $10^{-4}$ decays with 0.98 in each epoch.

\subsection{VGG on CIFAR10}
\label{sec:vggappendix}
The CIFAR-10 dataset~\cite{cifar10} consists of 60000 $32\times32$ color images in 10 object classes. 
We use 45000 samples in the training set for training, the rest 5000 for validation, and the 10000 samples in the test set for testing. 
We use a mini-batch with size of 128. 
The used VGG net is a modified version of the original VGG~\cite{bib:ICLR15:Simonyan}.
For data preprocessing, we use the setting provided by~\cite{torchCIFAR10}.
We use the default Adam optimizer provided by Pytorch to train full precision parameters for 100 epochs as the baseline of the full precision version. 
The initial learning rate is $0.01$, and it decays with 0.2 every $30$ epochs.

The network architecture is presented as,\\
2$\times$128C3 - MP2 - 2$\times$256C3 - MP2 - 2$\times$512C3 - MP2 - 2$\times$1024FC - 10SVM.

The structures of each layer chosen for \sysname are \textit{channel-wise, pixel-wise, pixel-wise, pixel-wise, pixel-wise, pixel-wise, subchannel-wise(16), subchannel-wise(2), subchannel-wise(2)} respectively.
After each pruning, the network is retrained to recover the accuracy degradation with 20 epochs of optimizing $\bm{B}_g$ and 10 epochs of optimizing $\bm{\alpha}_g$. 
The pruning ratio is 40\%, and 5/6 times pruning (Step 1) are executed after initialization in the reported experiment (\tabref{tab:cifar10}). 

In order to demonstrate the convergence of \sysname statistically, we plot the train loss curves (the mean of cross-entropy loss) of quantizing VGG on CIFAR10 with \sysname in 5 successive trials (see~\figref{fig:5loss}). We also plot one of them with detailed training steps of \sysname (see~\figref{fig:1loss}). 

\begin{figure}[htbp!]
     \centering
     \subfloat[]{\label{fig:5loss}
	    \includegraphics[width=0.45\linewidth]{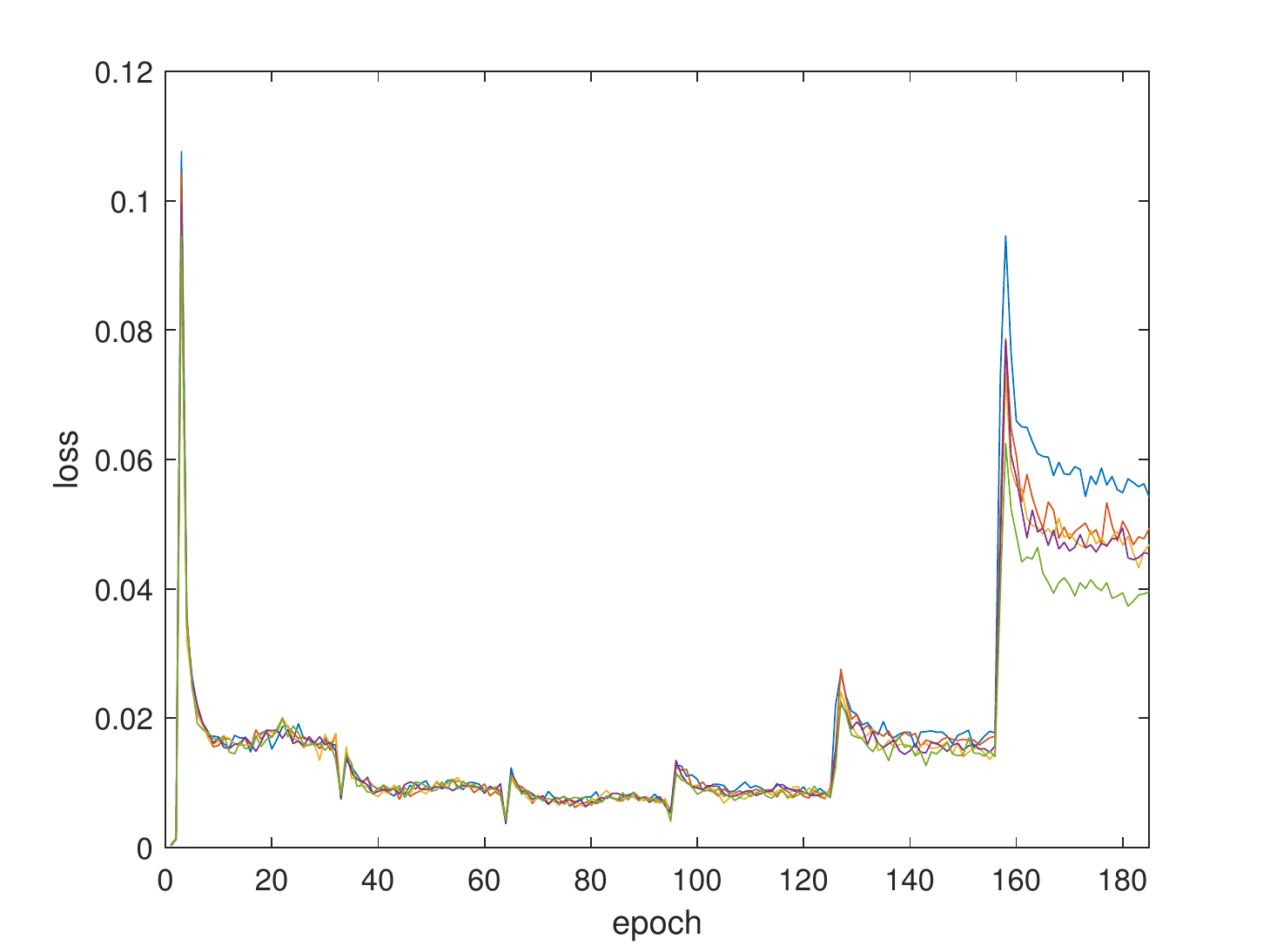}}
	 \subfloat[]{\label{fig:1loss}
	    \includegraphics[width=0.45\linewidth]{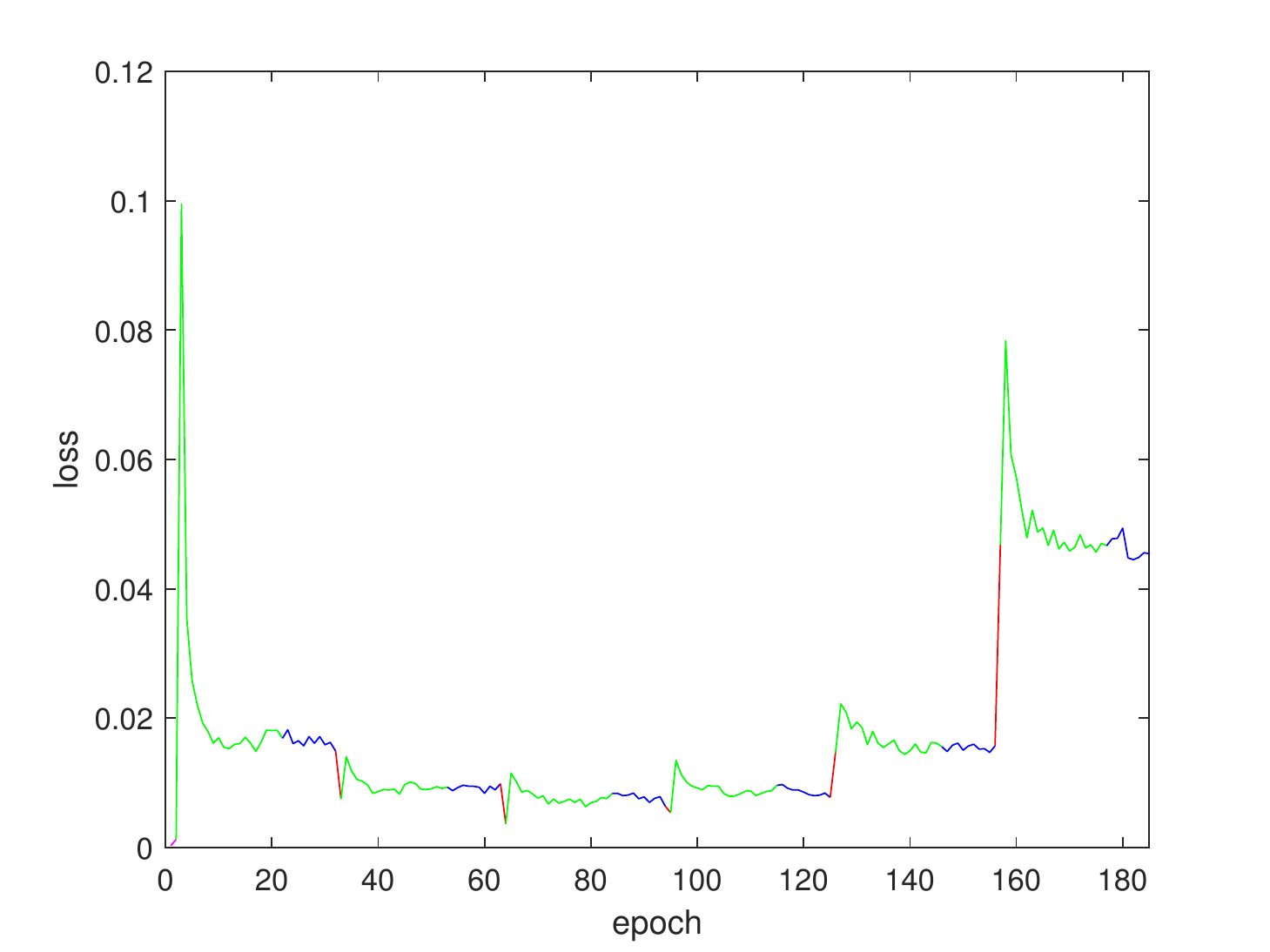}}
        \caption{The train loss of VGG on CIFAR10 trained by \sysname. (a) The train loss of 5 trials. (b) A detailed example train loss. 'Magenta' stands for initialization; 'Green' stands for optimizing $\bm{B}_g$ with speedup; 'Blue' stands for optimizing $\bm{\alpha}_g$; 'Red' stands for pruning in $\bm{\alpha}$ domain. Please see this figure in color.}
        \label{fig:three graphs}
\end{figure}

\subsection{ResNet18/34 on ILSVRC12}
\label{sec:resnet18appendix}
The ImageNet (ILSVRC12) dataset~\cite{ILSVRC15} consists of $1.2$ million high-resolution images for classifying in 1000 object classes. 
The validation set contains 50k images, which are used to report the accuracy level.
We use mini-batch with size of 256. The used ResNet18/34 is from~\cite{bib:CVPR16:He}.
For data preprocessing, we use the setting provided by~\cite{torchIMAGENET}.
We use the ResNet18/34 provided by Pytorch as the baseline of full precision version. 
The network architecture is the same as "resnet18/resnet34" in~\cite{torchResNet}.

The structures of each layer chosen for \sysname are all \textit{pixel-wise} except for the first layer (\textit{kernel-wise}) and the last layer (\textit{subchannel-wise(2)}).
After each pruning, the network is retrained to recover the accuracy degradation with 10 epochs of optimizing $\bm{B}_g$ and 5 epochs of optimizing $\bm{\alpha}_g$. 
The pruning ratio is 15\%, and 5/9 times pruning (Step 1) are executed after initialization in the reported experiments (\tabref{tab:ResNet}). 

For quantizing a large network with an average low bitwidth (\eg $\leq 2.0$), we find that adding our \textit{STE with loss-aware} steps in the end can result an around $1\%\sim2\%$ higher accuracy (see~\tabref{tab:ResNet}) than adding the optimizing steps of \secref{sec:updating}.
Thus, we add another 50 epochs of \textit{STE with loss-aware} in the end for quantizing ResNet18/34. 
The learning rate is $10^{-4}$, and gradually decays with $0.98$ per epoch. 
Here, \textit{STE with loss-aware} is just used in the end to further seeking a higher final accuracy.

We think this is due to the fact that several layers have already been pruned to an extremely low bitwidth ($<1.0$-bit).
With such an extremely low bitwidth, maintained full precision weights help to calibrate some aggressive steps of quantization, which slowly converges to a local optimum with a higher accuracy for a large network.
Recall that maintaining full precision parameters means STE is required to approximate the gradients, since the true-gradients only relate to the quantized parameters used in the forward propagation.
However, for the quantization bitwidth higher than two ($>2.0$-bit), the quantizer can already take smooth steps, and the gradient approximation brought from STE damages the training process inevitably.
Thus in this case, the true-gradient optimizer (our optimization steps in \secref{sec:updating}) can converge to a better local optimum, faster and more stable.

\end{document}